\theoremstyle{remark}
\newtheorem{assumption}{Assumption}
\newacronym{KL}{kl}{Kullback-Leibler}
\newacronym{SGD}{sgd}{stochastic gradient descent}
\newacronym{OED}{BOED}{Bayesian optimal experimental design}
\newacronym{MI}{MI}{mutual information}
\newacronym{EIG}{EIG}{expected information gain}
\newacronym{NMC}{NMC}{nested Monte Carlo}
\newacronym{MC}{MC}{Monte Carlo}
\newacronym{RMSE}{RMSE}{root mean squared error}
\DeclareMathOperator{\E}{{}\mathbb{E}}
\DeclareMathOperator{\1}{{}\mathbf{1}}
\DeclareMathOperator*{\argmin}{arg\,min} 
\DeclareMathOperator*{\argmax}{arg\,max} 
\renewcommand{\to}{\ensuremath{\rightarrow}}              
\newcommand{\eig}{\textup{EIG}}
\newcommand{\kl}[2]{\text{KL}\left(\,#1\,||\,#2\,\right)}
\newcommand{\entropy}[1]{H\!\left[#1\right]}
\newcommand{\expect}{\mathbb{E}}	
\newcommand{\R}{\mathbb{R}}
\newcommand{\qp}{q_p}
\newcommand{\qm}{q_m}
\newcommand{\ql}{q_\ell}
\newcommand{\Ipost}{\mathcal{L}_\text{post}}
\newcommand{\Imarg}{\mathcal{U}_\text{marg}}
\newcommand{\Ivnmc}{\mathcal{U}_\text{VNMC}}
\newcommand{\Iml}{\mathcal{I}_{\text{m}+\ell}}
\newcommand{\Idv}{\mathcal{L}_\text{DV}}
\newcommand{\ipost}{\hat{\mu}_\text{post}}
\newcommand{\imarg}{\hat{\mu}_\text{marg}}
\newcommand{\ivnmc}{\hat{\mu}_\text{VNMC}}
\newcommand{\iml}{\hat{\mu}_{\text{m}+\ell}}
\newcommand{\ilap}{\hat{\mu}_\text{laplace}}
\newcommand{\inmc}{\hat{\mu}_\text{NMC}}
\newcommand{\ilfire}{\hat{\mu}_\text{LFIRE}}
\newcommand{\idv}{\hat{\mu}_\text{DV}}
\newcommand{\lowerbound}{\mathcal{L}_{\text{post}}}
\newcommand{\upperbound}{\mathcal{U}_{\text{marg}}}
\newcommand{\upperboundvnmc}{\mathcal{U}_{\text{VNMC}}}
\newcommand{\Ialone}{\mu}
\newcommand{\ialone}{\hat{\mu}}
\newcommand{\iid}{\overset{\scriptstyle{\text{i.i.d.}}}{\sim}}
\newcommand{\cmark}{\ding{51}}%
\newcommand{\xmark}{\ding{55}}%
\newcommand{\settitle}{\@maketitle}
\title{Variational Bayesian Optimal Experimental Design}
\author{%
	Adam Foster\textsuperscript{\textdagger}\thanks{~~Part of this work was completed by AF during an internship with Uber AI Labs.} ~~ Martin Jankowiak\textsuperscript{\ddag} ~~ Eli Bingham\textsuperscript{\ddag} ~~ Paul Horsfall\textsuperscript{\ddag} \\ \textbf{Yee Whye Teh}\textsuperscript{\textdagger} ~~ \textbf{Tom Rainforth}\textsuperscript{\textdagger} ~~ \textbf{Noah Goodman}\textsuperscript{\ddag\textsection} \\
\textsuperscript{\textdagger}Department of Statistics, University of Oxford,
Oxford, UK \\ 
\textsuperscript{\ddag}Uber AI Labs, Uber Technologies Inc., San Francisco, CA, USA \\
\textsuperscript{\textsection}Stanford University, Stanford, CA, USA \\
  \texttt{adam.foster@stats.ox.ac.uk}
}
\begin{document}

\maketitle

\begin{abstract}
\acrfull{OED} is a principled framework for making efficient use of limited experimental resources. 
Unfortunately, its applicability is hampered by the difficulty of obtaining accurate estimates of the \acrfull{EIG} of an experiment.
To address this, we introduce several classes of fast \acrshort{EIG} estimators 
by building on ideas from amortized variational inference.
We show theoretically and empirically that these estimators can provide significant gains in speed and accuracy over previous approaches.
We further demonstrate the practicality of our approach on a number of end-to-end experiments.
\end{abstract}

\glsresetall

\section{Introduction}
\label{sec:introduction}

Tasks as seemingly diverse as designing a study to elucidate human cognition, selecting the next query point in an active learning loop, and designing online feedback surveys all constitute the same underlying problem: designing an experiment to maximize the information gathered.
\acrfull{OED} forms a powerful mathematical abstraction for tackling such problems~\citep{chaloner1995, lindley1956, sebastiani2000maximum, vincent2017} and has been successfully applied in numerous settings, including psychology \citep{myung2013}, Bayesian optimization \citep{hernandez2014}, active learning \citep{golovin2010}, bioinformatics \citep{vanlier2012}, and neuroscience \citep{shababo2013bayesian}.

In the~\acrshort{OED} framework, we construct a predictive model $p(y|\theta,d)$ for possible experimental outcomes $y$, given a design $d$ and a particular value of the parameters of interest $\theta$.  
We then choose the design that optimizes the \acrfull{EIG} in $\theta$ from running the experiment,
\begin{align}
\eig(d) & \triangleq
\mathbb{E}_{p(y|d)}\big[\entropy{p(\theta)}-\entropy{p(\theta | y, d)}\big],
\label{eq:EIG}
\end{align}
where $H[\cdot]$ represents the entropy and $p(\theta | y, d)\propto p(\theta)p(y|\theta,d)$ is the posterior resulting from running the experiment with design $d$ and observing outcome $y$.
In other words, we seek the design that, in expectation over possible experimental outcomes, most reduces the entropy of the posterior over our target latent variables.
If the predictive model is correct, this forms a design strategy that is (one-step) optimal from an information-theoretic viewpoint~\citep{lindley1972,sebastiani2000maximum}.

The \acrshort{OED} framework is particularly powerful in sequential contexts, where it allows the results of previous experiments to be used in guiding the designs for future experiments.
For example, as we ask a participant a series of questions in a psychology trial, we can use the information gathered from previous responses to ask more pertinent questions in the future, that will, in turn, return more information.
This ability to design experiments that are self-adaptive can substantially increase their efficiency: fewer iterations are required to uncover the same level of information.

In practice, however, the~\acrshort{OED} approach is often hampered by the difficulty of obtaining fast and high-quality estimates of the \acrshort{EIG}: due to the intractability of the posterior $p(\theta | y, d)$, it constitutes a nested expectation problem and so conventional \acrfull{MC} estimation methods cannot be applied~\citep{nmc}.
Moreover, existing methods for tackling nested expectations have, in general,
far inferior convergence rates than those for conventional expectations~\citep{lewi2009sequential,myung2013,rainforth2017thesis}.
For example, nested \acrshort{MC} (NMC) can only achieve, at best, a rate of $\mathcal{O}(T^{-1/3})$ in the total computational cost $T$~\citep{nmc}, compared with 
$\mathcal{O}(T^{-1/2})$ for conventional \acrshort{MC}.

To address this, we propose a variational \acrshort{OED} approach that sidesteps the double intractability of the \acrshort{EIG} in a principled manner and yields estimators with convergence rates in line with those for conventional estimation problems. To this end, we introduce four efficient and widely applicable variational estimators for the \acrshort{EIG}.
The different methods each present distinct advantages.
For example, two allow training with implicit likelihood models, while one allows for asymptotic consistency even when the variational family does not contain the target distribution.  

We theoretically confirm the advantages of our estimators, showing that they all have a convergence rate of $\mathcal{O}(T^{-1/2})$ when the variational family contains the target distribution.
We further verify their practical utility using a number of experiment design problems inspired by applications from science and industry, showing that they provide significant empirical gains in \acrshort{EIG} estimation over previous methods and that these gains lead, in turn, to improved end-to-end performance.

To maximize the space of potential applications and users for our estimators, we provide\footnote{Implementations of our methods are available at \texttt{http://docs.pyro.ai/en/stable/contrib.oed.html}. To reproduce the results in this paper, see \texttt{https://github.com/ae-foster/pyro/tree/vboed-reproduce}.} a general-purpose implementation of them in the probabilistic programming system Pyro \citep{pyro}, exploiting Pyro's first-class support for neural networks and variational methods.



\section{Background}
\label{sec:background}

The \acrshort{OED} framework is a model-based approach for choosing an experiment design $d$ in a manner 
that optimizes the information gained about some parameters of interest $\theta$ from the outcome $y$ of the experiment. For instance, we may wish to choose the question $d$ in a psychology trial to maximize the information gained about an underlying psychological property of the participant $\theta$ from their answer $y$ to the question.
In general, we adopt a Bayesian modelling framework with a prior $p(\theta)$ and a predictive model $p(y|\theta,d)$. The information gained about $\theta$ from running experiment $d$ and observing $y$ is the reduction in entropy from the prior to the posterior:
\begin{equation}
	\text{IG}(y,d) = \entropy{p(\theta)}-\entropy{p(\theta | y, d)}.
\end{equation}
At the point of choosing $d$, however, we are uncertain about the outcome. Thus, in order to define a metric to assess
the utility of the design $d$ we take the expectation of $\text{IG}(y,d)$ under
the marginal distribution over outcomes $p(y|d)=\mathbb{E}_{p(\theta)}[p(y|\theta,d)]$
as per~\eqref{eq:EIG}.  We can further rearrange this as
\begin{align}
\hspace{-1pt}	\eig(d) & 
	\label{eig:post}
	= \mathbb{E}_{p(y,\theta|d)} \left[ \log \frac{p(\theta | y, d)}{p(\theta)}  \right] 
	= \mathbb{E}_{p(y,\theta|d)} \left[ \log \frac{p(y, \theta | d)}{p(\theta)p(y|d)}  \right]
	= \mathbb{E}_{p(y,\theta|d)} \left[ \log \frac{p(y | \theta, d)}{p(y|d)} \right]
\end{align}
with the result that the \acrshort{EIG} can also be interpreted as the \acrlong{MI} between $\theta$ and $y$ given $d$,
or the epistemic uncertainty in $y$ averaged over the prior $p(\theta)$.
The Bayesian optimal design is defined as 
$d^* \triangleq \argmax_{d \in \mathcal{D}} \, \eig(d),$
where $\mathcal{D}$ is the set of permissible designs.

Computing the \acrshort{EIG} is challenging since neither $p(\theta | y, d)$ or $p(y|d)$ can, in general, be found in closed form.
Consequently, the integrand is intractable and conventional \acrshort{MC} methods are not  applicable.
One common way 
of getting around this is to employ a nested MC (NMC) estimator~\citep{myung2013,vincent2017}
\begin{equation}
\label{eq:nmc}
\hspace{-1pt} 
\inmc(d) \hspace{-1pt}\triangleq\hspace{-1pt} \frac{1}{N}\sum_{n=1}^{N}\log\frac{p(y_n|\theta_{n,0},d)}{\frac{1}{M}\sum_{m=1}^{M} p(y_n|\theta_{n,m},d)}
\,\,\,\,\,
\text{where} 
\,\,\,\,\,
\theta_{n,m} \! \overset{\scriptstyle{\text{i.i.d.}}}{\sim} \! p(\theta), 
\,
y_n \! \sim p(y|\theta=\theta_{n,0},d). 
\hspace{-7pt}
\end{equation}
\citet{nmc} showed that this estimator, which has a total computational cost $T=\mathcal{O}(NM)$, is consistent in the limit $N,M\to\infty$ with RMSE convergence rate $\mathcal{O}(N^{-1/2}+M^{-1})$, and that 
it is asymptotically optimal to set $M\propto \sqrt{N}$, yielding  an overall rate of $\mathcal{O}(T^{-1/3})$.

Given a base \acrshort{EIG} estimator, a variety of different methods can be used for the subsequent optimization over designs, including some specifically developed for BOED~\citep{amzal2006bayesian,muller2005simulation,rainforth2017thesis}.
In our experiments, we will adopt Bayesian optimization~\citep{snoek2012practical}, due to its sample efficiency, robustness to multi-modality, and ability to deal naturally with noisy objective evaluations.
However, we emphasize that our focus is on the base \acrshort{EIG} estimator and that our estimators can be used more generally with different optimizers.

The static design setting we have implicitly assumed thus far in our discussion  can be generalized to sequential contexts, in which we design $T$ experiments $d_1, ..., d_T$ with outcomes $y_1, ..., y_T$. We assume experiment outcomes are conditionally independent given the latent variables and designs, i.e.
\begin{equation}
	\label{eq:sequential}
	p(y_{1:T},\theta|d_{1:T}) = p(\theta)\prod_{t=1}^T p(y_t|\theta,d_t).
\end{equation}
Having conducted experiments $1, ..., t-1$, we can design $d_t$ by 
incorporating data in the standard Bayesian fashion:
at experiment iteration $t$, we replace
the prior $p\left(\theta\right)$ in \eqref{eig:post} with 
$p\left(\theta | d_{1:t-1}, y_{1:t-1}\right)$, the posterior conditional on the first $t-1$ designs and outcomes.
We can thus conduct an adaptive sequential experiment in which we optimize the choice of the design $d_t$ at each iteration.



\section{Variational Estimators}
\label{sec:method}

Though consistent, the convergence rate of the \acrshort{NMC} estimator is prohibitively slow for many practical problems.  
As such, \acrshort{EIG} estimation often becomes the bottleneck for \acrshort{OED}, particularly in sequential experiments where the \acrshort{OED} calculations must be fast enough to operate in real-time.

In this section we show how ideas from amortized variational inference \citep{dayan1995helmholtz,kingma2014auto,rezende2014stochastic,stuhlmuller2013learning} can be used to sidestep the double intractability of the \acrshort{EIG},
yielding estimators with much faster convergence rates thereby alleviating the \acrshort{EIG} bottleneck.
A key insight for realizing why such fundamental gains can be made is that the \acrshort{NMC} estimator is inefficient because a \emph{separate} estimate of the integrand in \eqref{eig:post} is made for each $y_n$. 
The variational approaches we introduce instead look to directly learn a \emph{functional approximation}---for 
example, an approximation of $y\mapsto p(y|d)$---and then evaluate this approximation at multiple points to estimate the integral, thereby allowing information to be shared across different values of $y$.  
If $M$ evaluations are made in learning the approximation, the total computational cost is now $T=\mathcal{O}(N+M)$,  yielding substantially improved convergence rates.

\paragraph{Variational posterior $\ipost$}
\label{sec:ipost}

Our first approach, which we refer to as the variational posterior estimator $\ipost$, is based on learning an amortized approximation $\qp(\theta|y,d)$ to the posterior $p(\theta | y,d)$ and then using this to estimate the \acrshort{EIG}:
\begin{align}	
\eig(d) \approx \lowerbound(d) \triangleq \mathbb{E}_{p(y, \theta | d)}\left[ \log \frac{\qp(\theta | y, d)}{p(\theta)} \right]
\approx \ipost(d) \triangleq \frac{1}{N} \sum_{n=1}^{N} \log \frac{ \qp(\theta_n|y_n,d)}{p(\theta_n)},
\label{eq:posteriorbound}	
\end{align}
where $y_n, \theta_n \iid p(y, \theta | d)$ and $\ipost(d)$ is a MC estimator of $\lowerbound(d)$.
We draw samples of $p(y,\theta|d)$ by sampling $\theta \sim p(\theta)$ and then $\,y|\theta \sim p(y|\theta,d)$.
We can think of this approach as amortizing the cost of the inner expectation, instead of running inference separately for each $y$.

To learn a suitable $\qp(\theta|y,d)$, we show in Appendix~\ref{sec:app:method} that $\lowerbound(d)$ forms a variational lower bound $\eig(d) \ge \lowerbound(d)$ that is tight if and only if $\qp(\theta|y,d) = p(\theta|y,d)$. \citet{ba} used this bound to estimate mutual information in the context of transmission over noisy channels, but the connection to experiment design has not previously been made.

This result means we can learn $\qp(\theta | y, d)$ by introducing a family of variational distributions $\qp(\theta | y, d, \phi)$  parameterized by $\phi$ and then maximizing the bound with respect to $\phi$:
\begin{align}
\label{eq:maxpost}
&\phi^* = \argmax_\phi \mathbb{E}_{p(y, \theta | d)}\left[ \log \frac{\qp(\theta | y, d, \phi)}{p(\theta)} \right], 
\quad \quad \eig(d) \approx \lowerbound(d;\phi^*).
\end{align}
Provided that we can generate samples from the model, this maximization can be performed using stochastic gradient methods \citep{robbins1951stochastic} and the unbiased gradient estimator
\begin{align}
\label{eq:gradl}
\begin{split}
\nabla_{\phi} \lowerbound(d;\phi) &\approx \tfrac{1}{S}\sum\nolimits_{i=1}^S \nabla_\phi \log \qp(\theta_i|y_i, d, \phi) \quad 
\text{where} \quad y_i, \theta_i \iid p(y, \theta | d),
\end{split}
\end{align}
and we note that no reparameterization is required as $p(y,\theta|d)$ is independent of $\phi$.  After $K$ gradient steps we obtain variational parameters $\phi_K$ that approximate $\phi^*$, which we use to compute a corresponding $\eig$ estimator by constructing a \acrshort{MC} estimator for $\lowerbound(d;\phi)$ as per~\eqref{eq:posteriorbound} with $\qp(\theta_n|y_n,d)=\qp(\theta_n|y_n,d,\phi_K)$.  Interestingly, the tightness of $\lowerbound(d)$ turns out to be equal to the expected \emph{forward} KL divergence\footnote{See Appendix~\ref{sec:app:method} for a proof. A comparison with the reverse KL divergence can be found in Appendix~\ref{sec:reverseforward}.} $\mathbb{E}_{p(y|d)} \left[\text{KL}\left(p(\theta | y,d)||\qp(\theta | y, d, \phi)\right)\right]$  so we can view this approach as learning an amortized proposal by minimizing this expected KL divergence.

\paragraph{Variational marginal $\imarg$}
\label{sec:imarg}
In some scenarios, $\theta$ may be high-dimensional, making it difficult to train a good variational posterior approximation.
An alternative approach that can be attractive in such cases is to instead learn an approximation $\qm(y|d)$ 
to the marginal density $p(y|d)$ and substitute this into the final form of the \acrshort{EIG} in~\eqref{eig:post}.
As shown in Appendix~\ref{sec:app:method}, this yields an \emph{upper bound}
\begin{align}
	\label{eq:margbound}
	\eig(d) 
	\le 
	\upperbound(d) 
	\triangleq 
	\mathbb{E}_{p(y, \theta | d)}\left[ \log \frac{p(y | \theta, d)}{\qm(y|d)} \right]
	\approx
	\imarg(d) \triangleq \frac{1}{N} \sum_{n=1}^{N} \log \frac{p(y_n | \theta_n, d)}{\qm(y_n | d)},
\end{align}
where again $y_n, \theta_n \iid p(y, \theta | d)$ and the bound is tight when $\qm(y|d)=p(y|d)$.  
Analogously to $\ipost$, we can learn $\qm(y|d)$ by introducing a variational family $\qm(y|d,\phi)$ and then performing stochastic gradient descent to \emph{minimize} $\upperbound(d,\phi)$.
As with $\ipost$, this bound was studied in a mutual information context \cite{poole2018variational}, but it has not been utilized for \acrshort{OED} before.

\paragraph{Variational \acrshort{NMC} $\ivnmc$}

As we will show in Section~\ref{sec:pointwise}, $\ipost$ and $\imarg$ can provide substantially faster convergence rates than \acrshort{NMC}.
However, this comes at the cost of converging towards a biased estimate if the variational family does not contain the target distribution. To address this, we propose another EIG estimator, $\ivnmc$, which allows one to trade-off resources between the fast learning of a biased estimator permitted by variational approaches, and the ability of NMC to eliminate this bias.\footnote{In Appendix~\ref{sec:cv}, we describe a method using $\qm(y|d)$ as a control variate that can also eliminate this bias and lower the variance of \acrshort{NMC}, requiring additional assumptions about the model and variational family.}

We can think of the \acrshort{NMC} estimator as approximating $p(y|d)$ using $M$ samples from the prior. At a high-level, $\ivnmc$ is based around learning a proposal $q_v(\theta | y, d)$ and then using samples from this proposal to make an importance sampling estimate of $p(y|d)$, potentially requiring far fewer samples than \acrshort{NMC}. Formally, it is based around a bound that can be arbitrarily tightened, namely
\begin{align}
\eig(d) &\le \expect\left[ \log p(y|\theta_0,d) - \log \frac{1}{L} \sum_{\ell=1}^L \frac{p(y,\theta_\ell|d)}{q_v(\theta_\ell|y,d)} \right] \triangleq \upperboundvnmc(d,L)
\label{eq:ivnmc}
\end{align}
where the expectation is taken over $y,\theta_{0:L} \sim p(y,\theta_0|d)\prod_{\ell=1}^L q_v(\theta_\ell|y,d)$, which corresponds to one sample $y,\theta_0$ from the  model and $L$ samples from the approximate posterior conditioned on $y$.
To the best of our knowledge, this bound has not previously been studied in the literature.
As with $\ipost$ and $\imarg$, we can minimize this bound to train a variational approximation $q_v(\theta|y,d,\phi)$.
Important features of $\upperboundvnmc(d,L)$ are summarized in the following lemma; see Appendix~\ref{sec:app:method} for the proof.
\begin{restatable}{lemma}{lemvnmc}
For any given model $p(\theta)p(y|\theta,d)$ and valid  $q_v(\theta | y,d)$,
\begin{enumerate}[leftmargin=0.2in]
	\setlength\itemsep{0em}
	\item $\eig(d) = \lim_{L\to\infty} \upperboundvnmc(d,L) \le \upperboundvnmc(d,L_2) \le \upperboundvnmc(d,L_1) \quad \forall L_2 \ge L_1 \ge 1$,
	\item  $\upperboundvnmc(d,L)=\eig(d) \,\,\,\, \forall L\ge1 \,\,\,\,$ if $\,\,\,\, q_v(\theta | y, d) = p(\theta | y, d) \,\,\,\, \forall y, \theta$,
	\item $\upperboundvnmc(d,L)-\eig(d) \!=\! \E_{p(y|d)}\! \left[\textnormal{KL}\left(\prod_{\ell=1}^L q_v(\theta_\ell|y,d) \big|\big| \frac{1}{L}\sum_{\ell=1}^L p(\theta_\ell|y,d)\prod_{k\ne\ell}q_v(\theta_k|y,d) \right) \right]$
\end{enumerate}
\label{lemma:vnmc}
\end{restatable}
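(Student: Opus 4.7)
My plan is to introduce the positive random variable $Z_L := \frac{1}{L}\sum_{\ell=1}^L p(y,\theta_\ell|d)/q_v(\theta_\ell|y,d)$, viewed as a function of $(y,\theta_{1:L})$ with $\theta_{1:L} \iid q_v(\cdot|y,d)$. Conditional on $y$, $Z_L$ is a standard importance-sampling estimator of $p(y|d)$, so $\mathbb{E}[Z_L\mid y] = p(y|d)$. All three parts of the lemma will follow from properties of $\log Z_L$.

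I would prove part 3 first, since it supplies the key identity that drives the rest. After cancelling the shared $\log p(y|\theta_0,d)$ term (whose marginal law is the same in both expectations), the gap $\upperboundvnmc(d,L)-\eig(d)$ reduces to $\mathbb{E}_{p(y|d)\prod_\ell q_v(\theta_\ell|y,d)}[\log(p(y|d)/Z_L)]$. Applying Bayes' rule $p(y,\theta_\ell|d) = p(y|d)\,p(\theta_\ell|y,d)$ lets me write $Z_L/p(y|d) = \frac{1}{L}\sum_\ell p(\theta_\ell|y,d)/q_v(\theta_\ell|y,d)$, and this ratio is exactly the Radon–Nikodym derivative of the symmetric mixture $\frac{1}{L}\sum_\ell p(\theta_\ell|y,d)\prod_{k\ne\ell}q_v(\theta_k|y,d)$ with respect to $\prod_\ell q_v(\theta_\ell|y,d)$ at the sample point. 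Taking the expectation of minus its log under $\prod_\ell q_v$ produces the claimed $\text{KL}$ divergence; averaging over $p(y|d)$ then gives part 3.

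Part 2 is then immediate: when $q_v(\theta|y,d)=p(\theta|y,d)$ every importance ratio equals $1$, so $Z_L \equiv p(y|d)$ deterministically and the $\text{KL}$ from part 3 vanishes for every $L$, giving $\upperboundvnmc(d,L) = \eig(d)$.

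For part 1 I would bundle three subclaims. Non-negativity of the KL in part 3 yields $\upperboundvnmc(d,L) \ge \eig(d)$. Monotonicity in $L$ follows by the standard importance-weighted ELBO argument: the identity $Z_{L+1} = \frac{1}{L+1}\sum_{j=1}^{L+1} Z_L^{(-j)}$, where $Z_L^{(-j)}$ denotes the leave-one-out average of $L$ of the $L+1$ samples, combined with concavity of $\log$ and Jensen's inequality, gives $\mathbb{E}[\log Z_{L+1}] \ge \mathbb{E}[\log Z_L]$ and hence $\upperboundvnmc(d,L)$ non-increasing. Finally, $Z_L \to p(y|d)$ almost surely by the strong law of large numbers conditional on $y$, so $\log Z_L \to \log p(y|d)$ a.s., and $\upperboundvnmc(d,L) \to \eig(d)$ will follow by interchanging limit and expectation. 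I expect this interchange to be the only real technical point: the sequence $\mathbb{E}[\log Z_L]$ is monotone non-decreasing in $L$ and bounded above by $\mathbb{E}[\log p(y|d)]$ (Jensen applied inside the conditional on $y$), and a mild integrability assumption on the importance weights closes the argument, as is standard in the IWAE convergence literature.
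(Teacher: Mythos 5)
Your proposal is correct and follows essentially the same route as the paper's proof: part 3 via rewriting the gap as an expected KL between the product proposal $\prod_\ell q_v$ and the symmetric mixture (the Le et al.\ identity), part 2 from the importance ratios collapsing to $p(y|d)$, and part 1's monotonicity via the Burda-style Jensen argument (your leave-one-out increment is a special case of the paper's subset-sampling identity) together with the conditional strong law of large numbers for the limit. The only differences are organizational --- you prove part 3 first and deduce $\upperboundvnmc(d,L)\ge\eig(d)$ from KL non-negativity rather than from monotone convergence down to the limit --- and your deferral of the limit/expectation interchange to an integrability assumption on the weights matches the paper, which assumes bounded importance ratios and invokes bounded plus monotone convergence.
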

\vspace{-5pt}
Like the previous bounds, the VNMC bound is tight when $q_v(\theta|y,d) = p(\theta|y,d)$. Importantly, the bound is also tight as $L\to\infty$, even for imperfect $q_v$. This means we can obtain asymptotically unbiased \acrshort{EIG} estimates even when the true posterior is not contained in the variational family.

Specifically, we first train $\phi$ using $K$ steps of stochastic gradient on $\upperboundvnmc(d,L)$ with some fixed $L$. To form a final \acrshort{EIG} estimator, however, we use a \acrshort{MC} estimator of $\upperboundvnmc(d,M)$ where typically $M \gg L$. This final estimator is a \acrshort{NMC} estimator that is consistent as $N,M\to \infty$ with $\phi_K$ fixed
\begin{align}
\label{eq:vnmc}
\begin{split}
\ivnmc(d) \triangleq \frac{1}{N}\sum_{n=1}^{N} \left(\log p(y_n|\theta_{n,0},d)
-\log \frac{1}{M}\sum_{m=1}^{M} \frac{p(y_n,\theta_{n,m}|d)}{q_v(\theta_{n,m}|y_n,d,\phi_K)}\right)
\end{split}
\end{align}
where $\theta_{n,0} \iid p(\theta)$, $y_n \sim p(y|\theta=\theta_{n,0},d)$ and $\theta_{n,m} \sim q_v (\theta | y=y_n,d,\phi_K)$. In practice, performance is greatly enhanced when the proposal $q_v$ is a good, if inexact, approximation to the posterior. 
This significantly improves upon traditional $\inmc$, which sets $q_v(\theta|y,d) = p(\theta)$ in \eqref{eq:vnmc}.

\paragraph{Implicit likelihood and $\iml$}

So far we have assumed that 
we can evaluate $p(y|\theta,d)$ pointwise. 
However, many models of interest have \textit{implicit likelihoods} from which we can draw samples, but not evaluate directly.
For example, models with nuisance latent variables $\psi$ (such as a random effect models)
are implicit likelihood models because $p(y|\theta, d) = \mathbb{E}_{p(\psi|\theta)}\left[p(y|\theta, \psi, d)\right]$ is intractable, but can still be straightforwardly sampled from.

In this setting, $\ipost$ is applicable without modification because it only requires samples from $p(y|\theta,d)$ and \emph{not} evaluations of this density.
Although $\imarg$ is not directly applicable in this setting, it can be modified to accommodate implicit likelihoods.
Specifically, we can utilize \emph{two} approximate densities: $\qm(y|d)$ for the marginal and $\ql(y|\theta,d)$ for the likelihood. We then form the approximation 
\begin{equation}
	\label{eq:marginallikelihoodest}
\eig(d) \approx \Iml(d)  \triangleq  \mathbb{E}_{p(y,\theta|d)}\left[ \log \frac{\ql(y|\theta,d)}{\qm(y|d)} \right] \approx \iml(d) \triangleq \frac{1}{N} \sum_{n=1}^{N} \log \frac{\ql(y_n | \theta_n, d)}{\qm(y_n | d)}.
\end{equation}
Unlike the previous three cases, $\Iml(d)$ is not a bound on $\eig(d)$, meaning it is not immediately clear how to train $\qm(y|d)$ and $\ql(y|\theta,d)$ to achieve an accurate \acrshort{EIG} estimator. The following lemma shows that we can bound the \acrshort{EIG} estimation \textit{error} of $\Iml$. The proof is in Appendix~\ref{sec:app:method}.
\begin{restatable}{lemma}{lemiml}
For any given model $p(\theta)p(y|\theta,d)$ and valid $\qm(y|d)$ and $\ql(y|\theta,d)$, we have
\begin{equation}
|\Iml(d) - \eig(d)| \le - \mathbb{E}_{p(y,\theta|d)}[\log \qm(y|d) + \log \ql(y|\theta,d)] + C,
\label{eq:mlbound}
\end{equation}
where $C=-H[p(y|d)] - \mathbb{E}_{p(\theta)}\left[ H(p(y|\theta,d) \right]$ does not depend on $\qm$ or $\ql$. Further, the RHS of \eqref{eq:mlbound} is 0 if and only if $\qm(y|d)=p(y|d)$ and $\ql(y|\theta,d)=p(y|\theta,d)$ for almost all $y,\theta$.
\label{lemma:ml}
\end{restatable}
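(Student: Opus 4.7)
The plan is to express $\Iml(d) - \eig(d)$ as a signed combination of two KL divergences, then apply the triangle inequality and unpack each KL into its entropy/cross-entropy pieces to recover the claimed form.

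First I would take the two identities
\begin{align*}
\eig(d) &= \mathbb{E}_{p(y,\theta|d)}\!\left[\log \tfrac{p(y|\theta,d)}{p(y|d)}\right], &
\Iml(d) &= \mathbb{E}_{p(y,\theta|d)}\!\left[\log \tfrac{\ql(y|\theta,d)}{\qm(y|d)}\right],
\end{align*}
and subtract to write
\[
\Iml(d) - \eig(d) = \mathbb{E}_{p(y|d)}\!\left[\log \tfrac{p(y|d)}{\qm(y|d)}\right] - \mathbb{E}_{p(\theta)}\!\left[\mathbb{E}_{p(y|\theta,d)}\!\log \tfrac{p(y|\theta,d)}{\ql(y|\theta,d)}\right].
\]
The first term is $\text{KL}\!\left(p(y|d)\,\|\,\qm(y|d)\right)$ and the second is $\mathbb{E}_{p(\theta)}\!\left[\text{KL}(p(y|\theta,d)\,\|\,\ql(y|\theta,d))\right]$. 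Both are non-negative, so the triangle inequality gives
\[
|\Iml(d) - \eig(d)| \le \text{KL}(p(y|d)\,\|\,\qm(y|d)) + \mathbb{E}_{p(\theta)}\!\left[\text{KL}(p(y|\theta,d)\,\|\,\ql(y|\theta,d))\right].
\]

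Next I would split each KL into its entropy and cross-entropy components: $\text{KL}(p(y|d)\,\|\,\qm) = -H[p(y|d)] - \mathbb{E}_{p(y|d)}[\log \qm(y|d)]$, and similarly $\mathbb{E}_{p(\theta)}[\text{KL}(p(y|\theta,d)\,\|\,\ql)] = -\mathbb{E}_{p(\theta)}[H(p(y|\theta,d))] - \mathbb{E}_{p(y,\theta|d)}[\log \ql(y|\theta,d)]$. Summing these and grouping the $\theta$- and $y$-independent terms into $C = -H[p(y|d)] - \mathbb{E}_{p(\theta)}[H(p(y|\theta,d))]$ yields exactly the bound \eqref{eq:mlbound}.

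Finally, for the tightness claim, observe that the RHS of \eqref{eq:mlbound} equals the sum of the two KL divergences above. A sum of non-negative quantities vanishes iff each summand vanishes, and $\text{KL}(\mu\,\|\,\nu)=0$ iff $\mu=\nu$ almost everywhere; so the RHS is zero iff $\qm(y|d)=p(y|d)$ a.e. and $\ql(y|\theta,d)=p(y|\theta,d)$ for $(p(y,\theta|d))$-almost all $(y,\theta)$, as claimed. The only subtlety worth being careful about is book-keeping the dominating measures so that the entropies in $C$ are well defined and the a.e.\ statements are interpreted with respect to $p(y,\theta|d)$; otherwise every step is a direct manipulation, and the triangle inequality is the only genuinely lossy move, but it is unavoidable since $\Iml$ is not a one-sided bound on $\eig$.
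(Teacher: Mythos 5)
Your proof is correct and follows essentially the same route as the paper: decompose $\Iml(d)-\eig(d)$ as $\text{KL}(p(y|d)\,\|\,\qm) - \mathbb{E}_{p(\theta)}[\text{KL}(p(y|\theta,d)\,\|\,\ql)]$, apply the triangle inequality, and expand each KL into entropy plus cross-entropy to identify $C$. The tightness argument via vanishing of each non-negative summand also matches the paper's.
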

This lemma implies that we can learn $\qm(y|d)$ and $\ql(y|\theta,d)$ by maximizing $\mathbb{E}_{p(y,\theta|d)}[\log \qm(y|d) + \log \ql(y|\theta,d)]$ using stochastic gradient ascent, and substituting these learned approximations into \eqref{eq:marginallikelihoodest} for the final \acrshort{EIG} estimator. 
To the best of our knowledge, this approach has not previously been considered in the literature.
We note that, in general, $\qm$ and $\ql$ are learned separately and there need not be any weight sharing between them. See Appendix~\ref{sec:app:iml} for a discussion of the case when we  couple $\qm$ and $\ql$ so that $\qm(y|d) = \E_{p(\theta)}[\ql(y|\theta,d)]$.

\paragraph{Using estimators for sequential \acrshort{OED}}

In sequential settings, we also need to consider the implications of replacing $p(\theta)$ in the \acrshort{EIG} with $p(\theta | d_{1:t-1}, y_{1:t-1})$. At first sight, it appears that, while $\imarg$ and $\iml$ only require samples from $p(\theta | d_{1:t-1}, y_{1:t-1})$, $\ipost$ and $\ivnmc$ also require its density to be evaluated, a potentially severe limitation.  Fortunately, we can, in fact, avoid evaluating this posterior density.  We note that, from \eqref{eq:sequential}, we have $p(\theta|y_{1:t-1},d_{1:t-1}) = p(\theta)\prod_{i=1}^{t-1}p(y_i|\theta,d_i)/p(y_{1:t-1}|d_{1:t-1})$. Substituting this into the integrand of \eqref{eq:posteriorbound} gives
\begin{equation}
	\Ipost(d_t) = \E_{p(\theta|y_{1:t-1},d_{1:t-1})p(y_t|\theta,d_t)}\left[ \log \frac{\qp(\theta|y_t,d_t)}{p(\theta)\prod_{i=1}^{t-1}p(y_i|\theta,d_i)} \right] + \log p(y_{1:t-1}|d_{1:t-1})
\end{equation}
where $p(\theta)\prod_{i=1}^{t-1}p(y_i|\theta,d_i)$ can be evaluated exactly and the additive constant $\log p(y_{1:t-1}|d_{1:t-1})$ does not depend on the new design $d_t$, $\theta$, or any of the variational parameters, and so can be safely ignored. Making the same substitution in \eqref{eq:vnmc} shows that we can also estimate $\Ivnmc(d_t, L)$ up to a constant, which can then be similarly ignored.  As such, any inference scheme for sampling $p(\theta | d_{1:t-1}, y_{1:t-1})$, approximate or exact, is compatible with all our approaches.




\paragraph{Selecting an estimator}
\begin{wraptable}{r}{0.52\columnwidth}
	\begin{center}
		\vspace{-16pt}
		\caption{Summary of \Acrshort{EIG} estimators. Baseline methods are explained in Section~\ref{sec:experimentseig}.}
		\label{tab:estimators}
		{\renewcommand{\arraystretch}{1}
			\setlength\tabcolsep{4pt} 
			\begin{tabular}{rlcccl}
			\hline
				&& \small Implicit & \small Bound & \small Consistent & \small Eq.  \\
				\hline
				\multirow{4}{*}{\rotatebox{90}{\small Ours~}} & \small $\ipost$ & \cmark & \small Lower & \xmark & \eqref{eq:posteriorbound} \\
				& \small $\imarg$ & \xmark & \small Upper & \xmark & \eqref{eq:margbound} \\
				& \small $\ivnmc$  & \xmark & \small Upper & \cmark & \eqref{eq:vnmc} \\
				& \small $\iml$  & \cmark & \xmark & \xmark & \eqref{eq:marginallikelihoodest} \\
				\hline
								\multirow{4}{*}{\rotatebox{90}{\small Baseline~}} & \small $\inmc$ & \xmark & \small Upper & \cmark & \eqref{eq:nmc} \\
				& \small $\ilap$ & \xmark & \xmark & \xmark & \eqref{eq:laplace} \\
				& \small $\ilfire$ & \cmark & \xmark & \xmark & \eqref{eq:lfire} \\
				& \small $\idv$ & \cmark & \small Lower & \xmark & \eqref{eq:dv} \\
				\hline
			\end{tabular}}
		\end{center}
		\vspace{-18pt}
	\end{wraptable}
Having proposed four estimators, we briefly discuss how to choose between them in practice. 
For reference, a summary of our estimators is given
in Table~\ref{tab:estimators}, along with several baseline approaches.
First, $\imarg$ and $\iml$ rely on approximating a distribution over $y$; $\ipost$ and $\ivnmc$ approximate distributions over $\theta$. We may prefer the former two estimators if $\text{dim}(y) \ll \text{dim}(\theta)$ as it leaves us with a simpler density estimation problem, and vice versa. Second, $\imarg$ and $\ivnmc$ require an explicit likelihood whereas $\ipost$ and $\iml$ do not. If an explicit likelihood is available, it typically makes sense to use it---one would never use $\iml$ over $\imarg$ for example. Finally, if the variational families do not contain the target densities, $\ivnmc$ is the only method guaranteed to converge to the true $\eig(d)$ in the limit as the computational budget increases.
So we might prefer $\ivnmc$ when computation time and cost are not constrained.



\section{Convergence rates}
\label{sec:pointwise}

We now investigate the convergence of our estimators.
We start by breaking the overall error down into three terms: I) variance in \acrshort{MC} estimation of the bound;
II) the gap between the bound and the tightest bound possible given the variational family;
and III) the gap between the tightest possible bound and $\eig(d)$.
With variational \acrshort{EIG} approximation $\mathcal{B}(d)\in\{\Ipost(d),\,\Imarg(d),\,\Ivnmc(d,L),\,\Iml(d)\}$, optimal variational parameters $\phi^*$, learned variational parameters $\phi_K$ after $K$ stochastic gradient iterations, and \acrshort{MC} estimator $\hat{\mu}(d,\phi_K)$ we have, by the triangle inequality,
\begin{subequations}
\begin{align}
\nonumber
\left\| \hat{\mu}(d,\phi_K) \!-\! \eig(d) \right \|_{2}
\le \underbrace{\left\|  \hat{\mu}(d,\phi_K) \!-\! \mathcal{B}(d,\phi_K)\right\|_{2}  }_{\rm I}
+ \underbrace{ \left\| \mathcal{B}(d,\phi_K)\!-\! \mathcal{B}(d,\phi^*) \right\|_{2}   }_{\rm II}
+ \underbrace{ \left| \mathcal{B}(d,\phi^*) \!-\! \eig(d) \right|  }_{\rm III}
\end{align}
\end{subequations}
where we have used the notation $\left\|X\right\|_{2}\triangleq\sqrt{\expect\left[X^2\right]}$ to denote the $L^2$ norm of a random variable.

By the weak law of large numbers, term I scales as $N^{-1/2}$ and can thus be arbitrarily reduced by taking more \acrshort{MC} samples.
Provided that our stochastic gradient scheme converges, term II can be reduced by increasing the number of stochastic gradient steps $K$.
Term III, however, is a constant that can only be reduced by expanding the variational family (or increasing $L$ for $\ivnmc$). 
Each approximation $\mathcal{B}(d)$ thus converges to a biased estimate of the $\eig(d)$, namely $\mathcal{B}(d,\phi^*)$.
As established by the following Theorem, if we set $N\propto K$, the rate of convergence to this biased estimate is $\mathcal{O}(T^{-1/2})$, where $T$ represents the total computational cost, with $T=\mathcal{O}(N+K)$.
\begin{restatable}{theorem}{themain}
	\label{the:main}
	Let $\mathcal{X}$ be a measurable space and $\Phi$ be a convex subset of a finite dimensional inner product space. Let $X_1, X_2,...$ be i.i.d.~random variables taking values in $\mathcal{X}$ and $f:\mathcal{X}\times \Phi \to \R$ 
	be a measurable function. Let
		$$\Ialone(\phi) \triangleq \expect[f(X_1, \phi)] \approx \ialone_N(\phi) \triangleq \frac{1}{N}\sum\nolimits_{n=1}^N f(X_n, \phi)$$
	and suppose that $\sup_{\phi \in \Phi} \|f(X_1, \phi)\|_{2} < \infty$. Then $
		\sup_{\phi \in \Phi} \|\ialone_N(\phi) - \Ialone(\phi)\|_{2} = \mathcal{O}(N^{-1/2})$.	
	Suppose further that Assumption~\ref{ass:sgd} in Appendix~\ref{sec:app:theory} holds and that $\phi^*$ is the unique minimizer of $\Ialone$. After $K$ iterations of the Polyak-Ruppert averaged stochastic gradient descent algorithm of \cite{moulines2011} with gradient estimator $\nabla_\phi f(X_t, \phi)$, we have $\|\Ialone(\phi_K) - \Ialone(\phi^*)\|_{2} = \mathcal{O}(K^{-1/2})$
	and, combining with the first result,$$\|\ialone_N(\phi_K) - \Ialone(\phi^*)\|_{2} = \mathcal{O}(N^{-1/2}+K^{-1/2}) 
		= \mathcal{O}(T^{-1/2}) \text{ if } N\propto K.$$
\end{restatable}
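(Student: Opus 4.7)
The proof naturally splits into three pieces mirroring the error decomposition sketched just before the theorem, and I would treat them separately before combining via the triangle inequality.

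First I would establish the uniform Monte Carlo bound. The key subtlety is that the claim bounds $\sup_\phi \|\ialone_N(\phi) - \Ialone(\phi)\|_2$, i.e.\ the supremum of $L^2$-norms rather than the $L^2$-norm of a supremum, so no uniform law of large numbers or empirical process machinery is needed. For each fixed $\phi\in\Phi$, the $f(X_n,\phi)$ are i.i.d.\ with finite second moment, giving
\begin{equation*}
\|\ialone_N(\phi) - \Ialone(\phi)\|_2^2 = \tfrac{1}{N}\,\mathrm{Var}(f(X_1,\phi)) \le \tfrac{1}{N}\,\|f(X_1,\phi)\|_2^2 .
\end{equation*}
Taking the supremum over $\phi$ and using the hypothesis $\sup_\phi \|f(X_1,\phi)\|_2 < \infty$ yields the $\mathcal{O}(N^{-1/2})$ rate.

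Next I would invoke the Polyak–Ruppert averaging theorem of Moulines and Bach (2011) to handle the SGD term. Assumption~\ref{ass:sgd} is designed to supply exactly their hypotheses (strong convexity, smoothness and bounded-variance conditions on the stochastic gradients $\nabla_\phi f(X,\phi)$); together with uniqueness of $\phi^*$ these give $\expect\|\phi_K - \phi^*\|^2 = \mathcal{O}(K^{-1})$ for the averaged iterates. Transferring back to the objective is a one-line step: Lipschitz continuity of $\Ialone$ (a consequence of the bounded-gradient part of Assumption~\ref{ass:sgd}) gives $|\Ialone(\phi_K) - \Ialone(\phi^*)| \le L\|\phi_K-\phi^*\|$, so $\|\Ialone(\phi_K) - \Ialone(\phi^*)\|_2 = \mathcal{O}(K^{-1/2})$.

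To combine, I apply the triangle inequality
\begin{equation*}
\|\ialone_N(\phi_K) - \Ialone(\phi^*)\|_2 \le \|\ialone_N(\phi_K) - \Ialone(\phi_K)\|_2 + \|\Ialone(\phi_K) - \Ialone(\phi^*)\|_2 ,
\end{equation*}
whose second term is already $\mathcal{O}(K^{-1/2})$. For the first term, I assume (as is standard and implicit in the setup) that the $N$ samples used to form the final estimator $\ialone_N$ are drawn independently of those used to produce $\phi_K$; then conditioning on $\phi_K$ and applying the bound from the first paragraph to the conditional variance gives a bound of $N^{-1/2}\sup_\phi\|f(X_1,\phi)\|_2$ independent of the realized $\phi_K$, hence the same rate unconditionally. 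Setting $N\propto K$ and noting $T = \Theta(N+K)$ yields $\mathcal{O}(T^{-1/2})$.

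The main obstacle is really the middle step: one needs to check that Assumption~\ref{ass:sgd} genuinely puts us in the regime where Moulines–Bach delivers the fast $\mathcal{O}(K^{-1})$ rate in parameter space (strong convexity is the delicate part, since $\Ialone$ is an expectation of a user-supplied $f$). Everything else—the variance calculation, the Lipschitz transfer and the conditioning argument—is essentially bookkeeping once that structural assumption is in place.
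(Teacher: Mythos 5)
Your proposal is correct and follows essentially the same route as the paper's proof: the fixed-$\phi$ variance calculation for the Monte Carlo term, Theorem 3 of Moulines--Bach for $\E\|\phi_K-\phi^*\|^2 = \mathcal{O}(K^{-1})$, a Lipschitz transfer to function values, and the final triangle inequality. Your version is in fact marginally tidier in two spots --- you apply the Lipschitz bound directly to $|\Ialone(\phi_K)-\Ialone(\phi^*)|$ where the paper separately controls the bias and the variance of $\Ialone(\phi_K)$, and you make explicit the independence of the final $N$ samples from $\phi_K$ needed to reuse the fixed-$\phi$ bound --- but these are refinements of the same argument, not a different one.
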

\vspace{-5pt}
The proof relies on standard results from \acrshort{MC} and stochastic optimization theory; see Appendix~\ref{sec:app:theory}.
We note that the assumptions required for the latter, though standard in the literature, are strong.
In practice, $\phi$ can converge to a local optimum $\phi^\dag$, rather than the global optimum $\phi^*$, introducing an additional asymptotic bias $\left| \mathcal{B}(d,\phi^\dag) - \mathcal{B}(d,\phi^*) \right|$ into term III.

Theorem~\ref{the:main} can be applied directly to $\imarg$, $-\ipost$, and $\ivnmc$ (with fixed $M=L$), showing that they converge respectively to $\upperbound(d,\phi^*)$, $-\lowerbound(d,\phi^*)$, and $\upperboundvnmc(d,L,\phi^*)$ at a rate $=\mathcal{O}(T^{-1/2})$ if $N\propto K$ and the assumptions are satisfied. For $\iml$, we combine Theorem~\ref{the:main} and Lemma~\ref{lemma:ml} to obtain the same $\mathcal{O}(T^{-1/2})$ convergence rates; see the supplementary material for further details.

The key property of $\ivnmc$ is that we need not set $M=L$ and can remove the asymptotic bias by increasing $M$ with $N$.
We begin by training $\phi$ with a fixed value of $L$, decreasing the error term $\|\Ivnmc(d, L, \phi_K) - \Ivnmc(d, L, \phi^*)\|_2$ at the fast rate $\mathcal{O}(K^{-1/2})$ until $|\Ivnmc(d, L, \phi^*) - \eig(d)|$ becomes the dominant error term. At this point, we start to increase $N, M$. Using the NMC convergence results discussed in Sec.~\ref{sec:background}, if we set $M \propto \sqrt{N}$, then $\ivnmc$ converges to $\eig(d)$ at a rate $\mathcal{O}((NM)^{-1/3})$. Note that the total cost of the $\ivnmc$ estimator is $T=\mathcal{O}(KL+NM)$, where typically $M\gg L$. The first stage, costing $KL$, is fast variational training of an amortized importance sampling proposal for $p(y|d) = \expect_{p(\theta)}[p(y|\theta,d)]$. The second stage, costing $NM$, is slower refinement to remove the asymptotic bias using the learned proposal in an NMC estimator. One can think of the standard NMC approach as a special case of $\ivnmc$ in which we naively choose $p(\theta)$ as the proposal. That is, standard NMC skips the first stage and hence does not benefit from the improved convergence rate of learning an amortized proposal. It typically requires a much higher total cost to achieve the same accuracy as VNMC.

%



\section{Related work}
\label{sec:experimentseig}
  
We briefly discuss alternative approaches to \acrshort{EIG} estimation for \acrshort{OED} that will form our baselines for empirical comparisons. The \textbf{\Acrfull{NMC}} baseline was introduced in Sec.~\ref{sec:background}. Another established approach is to use a \textbf{Laplace approximation} to the posterior \citep{lewi2009sequential,long2013}; this approach is fast but is limited to continuous variables and can exhibit large bias. \citet{kleinegesse2018efficient} recently suggested an implicit likelihood approach based on the Likelihood-Free Inference by Ratio Estimation \textbf{(LFIRE)} method of~\citet{dutta2016likelihood}. We also consider a method based on the \textbf{Donsker-Varadhan (DV)} representation of the KL divergence \citep{donsker1975asymptotic} as used by \citet{mine} for mutual information estimation. 
Though not previously considered in \acrshort{OED}, we include it as a baseline for illustrative purposes. For a full discussion of the DV bound and a number of other variational bounds used in deep learning, we refer to  the recent work of \citet{poole2018variational}.
For further discussion of related work, see Appendix~\ref{sec:app:relatedwork}.



\section{Experiments}
\label{sec:experiments}

\subsection{\acrshort{EIG} estimation accuracy}
\label{sec:eigaccuracy}

\setlength{\tabcolsep}{2pt}
\begin{table*}[t]
	\begin{center}
		\caption{Bias squared and variance from 5 runs, averaged over designs, of EIG estimators applied to four benchmarks. We use - to denote that a method does not apply and $*$ when it is superseded by other methods. Bold indicates the estimator with the lowest empirical mean squared error.
			}
		\label{tab:abserrors}
		\begin{tabu}{lcccccccc} 
		\hline
			& \multicolumn{2}{c}{\small \textbf{A/B test}} & \multicolumn{2}{c}{\small \textbf{Preference}}  & \multicolumn{2}{c}{\small \textbf{Mixed effects}}  & \multicolumn{2}{c}{\small \textbf{Extrapolation}} \\  
			& \small Bias$^2$ & \small Var  & \small Bias$^2$ & \small Var  & \small Bias$^2$ & \small Var  & \small Bias$^2$ & \small Var   \\
			\tabucline[1pt]{-}
			\small $\ipost$ & $\scriptstyle{1.33\times 10^{-2}}$ & $\scriptstyle{7.15\times 10^{-3}}$ & $\scriptstyle{4.26\times 10^{-2}}$ & $\scriptstyle{8.53\times 10^{-3}}$ & $\scriptstyle{2.34\times 10^{-3}}$ & $\scriptstyle{2.92\times 10^{-3}}$ & $\scriptstyle{1.24\times 10^{-4}}$ & $\scriptstyle{5.16\times 10^{-5}}$    \\ 
			\small $\imarg$ & $\scriptstyle{7.45\times 10^{-2}}$   & $\scriptstyle{6.41\times 10^{-3}}$  & $\mathbf{\scriptstyle{1.10\times 10^{-3}}}$ & $\mathbf{\scriptstyle{1.99\times 10^{-3}}}$ &\small - & \small - & \small - & \small -    \\   
			\small $\ivnmc$ & $\scriptstyle{3.44\times 10^{-3}}$ &  $\scriptstyle{3.38\times 10^{-3}}$   & $\scriptstyle{4.17\times 10^{-3}}$  & $\scriptstyle{9.04\times 10^{-3}}$ & \small - & \small - & \small - & \small -  \\
			\small $\iml$ & \small $*$   &\small $*$  & \small $*$ & \small $*$ & $\mathbf{\scriptstyle{3.06\times 10^{-3}}}$ & $\mathbf{\scriptstyle{5.94\times10^{-5}}}$ &  $\mathbf{\scriptstyle{6.90\times 10^{-6}}}$ &  $\mathbf{\scriptstyle{1.84\times 10^{-5}}}$    \\   
			\hline
			\small $\inmc$ & $\scriptstyle{4.70\times 10^0}$ & $\scriptstyle{3.47\times 10^{-1}}$ &  $\scriptstyle{7.60\times 10^{-2}}$ & $\scriptstyle{8.36\times 10^{-2}}$ & \small - & \small -  & \small - & \small -     \\ 
			\small $\ilap$ & \small $\mathbf{\scriptstyle{1.92\times 10^{-4}}}$ & $\mathbf{\scriptstyle{1.47\times 10^{-3}}}$ & $\scriptstyle{8.42\times 10^{-2}}$ & $\scriptstyle{9.70\times 10^{-2}}$ & \small - & \small -  & \small - & \small -   \\ 
			\small $\ilfire$ & $\scriptstyle{2.29\times10^0}$ &  $\scriptstyle{6.20\times10^{-1}}$  & $\scriptstyle{1.30\times 10^{-1}}$ & $\scriptstyle{1.41\times 10^{-2}}$ & $\scriptstyle{1.41\times 10^{-1}}$ & $\scriptstyle{6.67\times 10^{-2}}$ & \small - & \small - \\ 
			\small $\idv$ & $\scriptstyle{4.34\times 10^0}$ &  $\scriptstyle{8.85\times 10^{-1}}$ & $\scriptstyle{9.23\times 10^{-2}}$ & $\scriptstyle{8.07\times 10^{-3}}$ & $\scriptstyle{9.10 \times 10^{-3}}$ & $\scriptstyle{5.56\times 10^{-4}}$ & $\scriptstyle{7.84\times 10^{-6}}$ & $\scriptstyle{4.11\times 10^{-5}}$ \\
			\hline
		\end{tabu}
	\end{center}
	\vspace{-5pt}
\end{table*}
\setlength{\tabcolsep}{6pt}

We begin by benchmarking our \acrshort{EIG} estimators against the aforementioned baselines.
We consider four experiment design scenarios inspired by applications of Bayesian data analysis in science and industry. First, \textbf{A/B testing} is used across marketing and design \cite{kohavi2009controlled, box2005statistics} to study population traits. Here, the design is the choice of the A and B group sizes and the Bayesian model is a Gaussian linear model. Second, revealed \textbf{preference} \cite{samuelson1948consumption} is used in economics to understand consumer behaviour. We consider an experiment design setting in which we aim to learn the underlying utility function of an economic agent by presenting them with a proposal (such as offering them a price for a commodity) and observing their revealed preference.
Third, fixed effects and random effects (nuisance variables) are combined in \textbf{mixed effects} models~\cite{kruschke2014doing,gelman2013bayesian}. We consider an example inspired by item-response theory \cite{embretson2013item} in psychology. We seek information only about the fixed effects, making this an implicit likelihood problem.
Finally, we consider an experiment where labelled data from one region of design space must be used to predict labels in a target region by \textbf{extrapolation} \cite{mackay1992information}.  In summary, we have two models with
explicit likelihoods (A/B testing, preference) and two that are implicit (mixed effects, extrapolation). Full details of each model are presented in Appendix~\ref{sec:appendix:exp}.

For each scenario, we estimated the \acrshort{EIG} across a grid of designs with a fixed computational budget for each estimator and calculated the true \acrshort{EIG} analytically or with brute force computation as appropriate; see Table~\ref{tab:abserrors} for the results.
Whilst the Laplace method, unsurprisingly, performed best for the Gaussian linear model where its approximation becomes exact, we see that our methods are otherwise more accurate. All our methods outperformed \acrshort{NMC}.

\subsection{Convergence rates}

\begin{figure*}[t]
	\centering
	\begin{subfigure}[b]{0.23\textwidth}
		\centering
		\includegraphics[width=\textwidth]{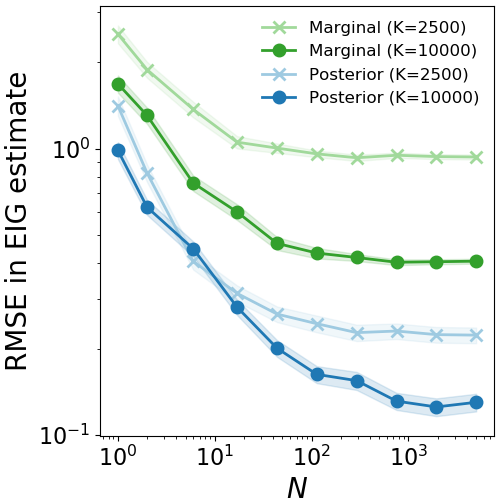}
		\caption{Convergence in $N$\label{fig:convN}}
	\end{subfigure}
	~
	\begin{subfigure}[b]{0.23\textwidth}
		\centering
		\includegraphics[width=\textwidth]{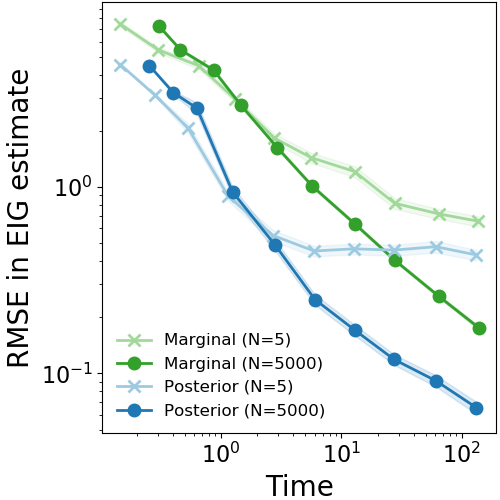}
		\caption{Convergence in $K$\label{fig:convK}}
	\end{subfigure}
	~
	\begin{subfigure}[b]{0.23\textwidth}
		\centering
		\includegraphics[width=\textwidth]{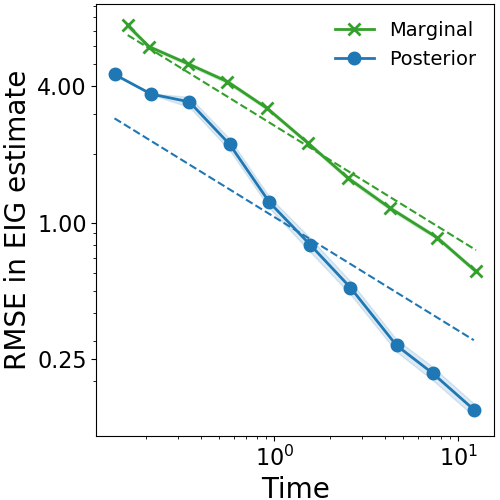}
		\caption{Convergence $N=K$\label{fig:convNK}}
	\end{subfigure}
	~
	\begin{subfigure}[b]{0.23\textwidth}
		\centering
		\includegraphics[width=\textwidth]{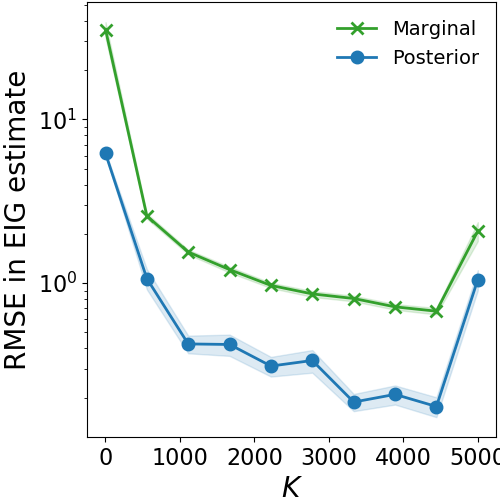}
		\caption{Fixed budget $N+K$\label{fig:convFixed}}
	\end{subfigure}
	\caption{Convergence of RMSE for $\ipost$ and $\imarg$.
		(a) Convergence in number of MC samples $N$ with a fixed number $K$ of gradient updates of the variational parameters. 
		(b) Convergence in time when increasing $K$ and with $N$ fixed.
		(c) Convergence in time when setting $N=K$ and increasing both (dashed lines represent theoretical rates).
		(d) Final RMSE with $N+K=5000$ fixed, for different $K$.
		Each graph shows the mean with shading representing $\pm1$ std.~err.~from 100 trials.}
		\vspace{-5pt}
	\label{fig:pmconvergence}
\end{figure*}

We now investigate the empirical convergence characteristics of our estimators. Throughout, we consider a single design point from the A/B test example. We start by examining the convergence of $\ipost$ and $\imarg$ as we allocate the computational budget in different ways. 

We first consider the convergence in $N$ after a fixed number of $K$ updates to the variational parameters.
As shown in Figure~\ref{fig:convN}, the RMSE initially decreases as we increase $N$, before plateauing due to the bias in the estimator.
We also see that $\ipost$ substantially outperforms $\imarg$.
We next consider the convergence as a function of wall-clock time when $N$ is held fixed and we increase $K$.
We see in Figure~\ref{fig:convK} that, as expected, the errors decrease with time and that when a small value of $N=5$ is taken, we again see a plateauing effect, with the variance of the final \acrshort{MC} estimator now becoming the limiting factor.
In Figure~\ref{fig:convNK} we take $N=K$ and increase both, obtaining the predicted convergence rate $\mathcal{O}(T^{-1/2})$ (shown by the dashed lines). 
We conjecture that the better performance of $\ipost$ is likely due to $\theta$ being lower dimensional (${\rm dim}=2$)
than $y$ (${\rm dim}=10$).
In Figure~\ref{fig:convFixed}, we instead fix $T=N+K$ to investigate the optimal trade-off between optimization and \acrshort{MC} error: it appears the range of $K/T$ between $0.5$ and $0.9$ gives the lowest RMSE. 

\begin{wrapfigure}{r}{0.4\columnwidth}
	\vspace{-10pt}
	\begin{center}
		\includegraphics[width=0.35\textwidth]{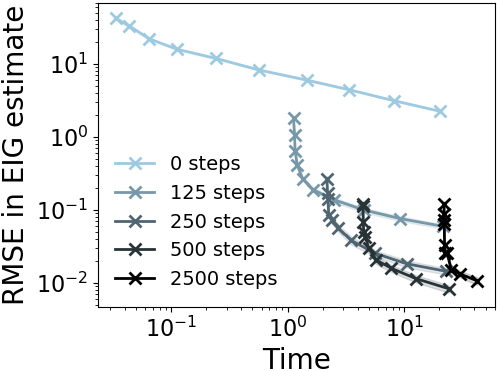}
		\caption{Convergence of $\ivnmc$ taking $M \!=\! \sqrt{N}$. `Steps' refers to pre-training of the variational posterior (i.e. $K$), with 0 steps corresponding to $\inmc$.  
			Means and confidence intervals as per Fig.~\ref{fig:pmconvergence}.
			\label{fig:vnmcconvergence}}
	\end{center}
	\vspace{-30pt}
\end{wrapfigure}

Finally, we show how $\ivnmc$ can improve over NMC by using an improved variational proposal for estimating $p(y|d)$. In Figure~\ref{fig:vnmcconvergence}, we plot the EIG estimates obtained by first running $K$ steps of stochastic gradient with $L=1$ to learn $q_v(\theta|y,d)$, before increasing $M$ and $N$. We see that spending some of our time budget training $q_v(\theta|y,d)$ leads to noticeable improvements in the estimation, but also that it is important to increase $N$ and $M$.
Rather than plateauing like $\ipost$ and $\imarg$, $\ivnmc$ continues to improve after the initial training period as, albeit at a slower $O(T^{-1/3})$ rate.

\subsection{End-to-end sequential experiments}
\label{sec:mturk}

We now demonstrate the utility of our methods for designing sequential experiments.
First, we demonstrate that our variational estimators are sufficiently robust and  fast to be used for adaptive experiments with a class of models that are of practical importance in many scientific disciplines. To this end, we run an adaptive psychology experiment with human participants recruited from Amazon Mechanical Turk to study how humans respond to features of stylized faces. To account for fixed effects---those \emph{common} across the population---as well as individual variations that we treat as nuisance variables, we use the mixed effects regression model introduced in Sec.~\ref{sec:eigaccuracy}. 
See Appendix~\ref{sec:appendix:exp} for full details of the experiment.

To estimate the EIG for different designs, we use $\iml$, 
since it yields the best performance on our mixed effects model benchmark (see Table~\ref{tab:abserrors}). 
Our \acrshort{EIG} estimator is integrated into a system that presents participants with a stimulus, receives their response, learns an updated model, and designs the next stimulus, all online.
Despite the relative simplicity of the design problem (with 36 possible designs) using BOED with $\iml$ leads to a more certain (i.e.~lower entropy) posterior than random design; 
see Figure~\ref{fig:turkexperiment}.

\begin{wrapfigure}{r}{0.45\columnwidth}
\vspace{-12pt}
	\begin{center}
		\includegraphics[width=.4\textwidth]{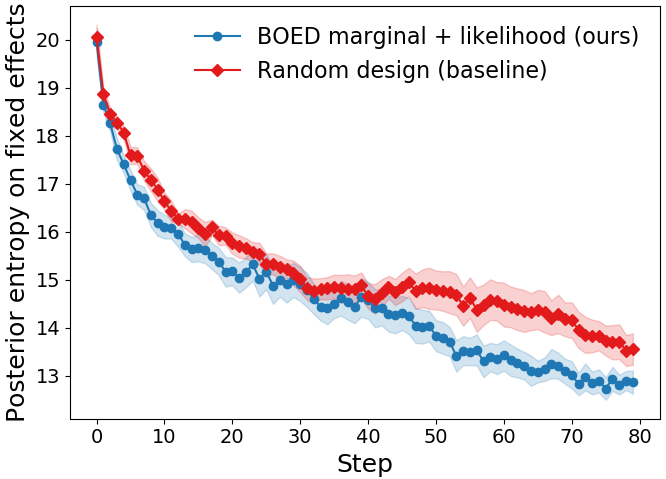}
		\caption{Evolution of the posterior entropy of the fixed effects in the Mechanical Turk experiment in Sec.~\ref{sec:mturk}. We depict the mean and $\pm1$ std.~err.~from 10 experimental trials.}
		\label{fig:turkexperiment}
	\end{center}
	\vspace{-12pt}
	
\end{wrapfigure}

Second, we consider a more challenging scenario in which a random design strategy gleans very little. We compare random design against two \acrshort{OED} strategies: $\imarg$ and $\inmc$. Building on the revealed preference example in Sec.~\ref{sec:eigaccuracy}, 
we consider an experiment to infer an agent's utility function which we model using the
Constant Elasticity of Substitution (CES) model \cite{arrow1961capital} with latent variables $\rho, \bm{\alpha}, u$.
We seek designs for which the agent's response will be informative about $\theta = (\rho,\bm{\alpha},u)$.
See Appendix~\ref{sec:appendix:exp} for full details. 
We estimate the \acrshort{EIG} using $\imarg$ because the dimension of $y$ is smaller than that of $\theta$, and select designs $d \in[0,100]^6$ using Bayesian optimization. To investigate parameter recovery we simulate agent responses from the model with fixed values of $\rho,\bm{\alpha},u$. Figure \ref{fig:ces} shows that using BOED with our marginal estimator reduces posterior entropy \textit{and} concentrates more quickly on the true parameter values than both baselines. Random design makes no inroads into the learning problem, while \acrshort{OED} based on NMC 
particularly struggles at the outset when $p(\theta|d_{1:t-1},y_{1:t-1})$, the prior at iteration $t$,
is high variance. Our method selects informative designs throughout.

\begin{figure*}[t]
	\centering
	\begin{subfigure}[b]{0.76\textwidth}
		\centering
		\includegraphics[width=\textwidth]{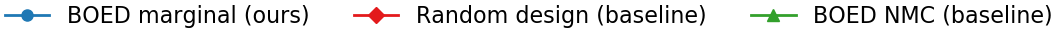}
	\end{subfigure}
	\hfill
	\begin{subfigure}[b]{0.24\textwidth}
		\centering
		\includegraphics[width=\textwidth]{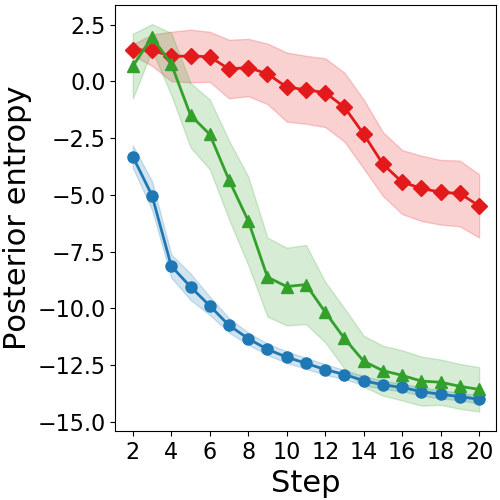}
		\caption{Entropy\label{fig:cesentropy}}
	\end{subfigure}
	\begin{subfigure}[b]{0.24\textwidth}
		\centering
		\includegraphics[width=\textwidth]{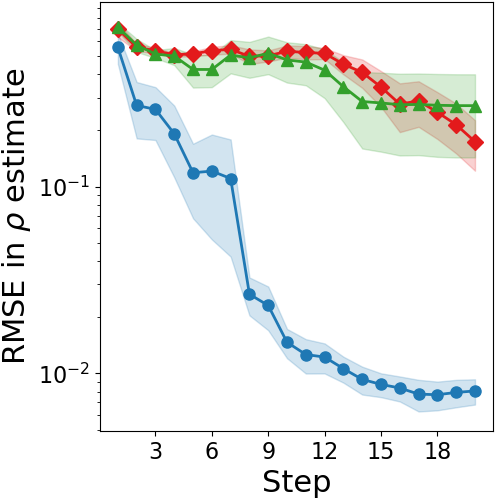}
		\caption{Posterior RMSE of $\rho$\label{fig:cesrho}}
	\end{subfigure}
	\begin{subfigure}[b]{0.24\textwidth}
		\centering
		\includegraphics[width=\textwidth]{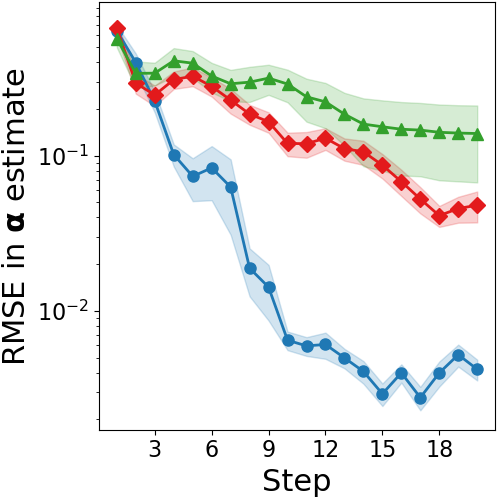}
		\caption{Posterior RMSE of $\bm{\alpha}$ \label{fig:cesalpha}}
	\end{subfigure}
	\begin{subfigure}[b]{0.24\textwidth}
		\centering
		\includegraphics[width=\textwidth]{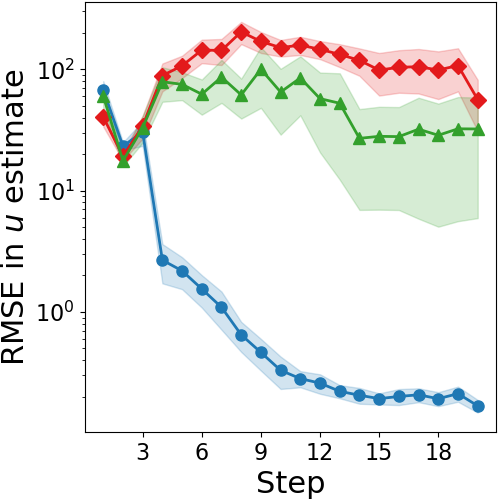}
		\caption{Posterior RMSE of $u$ \label{fig:cesu}}
	\end{subfigure}
	\caption{Evolution of the posterior in the sequential CES experiment. (a) Total entropy of a mean-field variational approximation of the posterior. (b)(c)(d) The RMSE of the posterior approximations of $\rho$, $\bm{\alpha}$ and $u$ as compared to the true values used to simulate agent responses. Note the scale of the vertical axis is logarithmic. All plots show the mean and $\pm1$ std.~err.~from 10 independent runs.}
	\label{fig:ces}
\end{figure*}



\section{Discussion}
\label{sec:discussion}

We have developed efficient \acrshort{EIG} estimators that are applicable to a wide range of experimental design problems.
By tackling the double intractability of the \acrshort{EIG} in a principled manner, they provide substantially
improved convergence rates relative to previous approaches, and our experiments show that these theoretical
advantages translate into significant practical gains.
Our estimators are well-suited to modern deep probabilistic programming languages and we have provided an implementation
in Pyro.
We note that the interplay between variational and MC methods in EIG estimation is not directly analogous
to those in standard inference settings because the NMC EIG estimator is itself inherently biased.
Our $\ivnmc$ estimator allows one to play off the advantages of these approaches, namely the fast learning of 
variational approaches and asymptotic consistency of NMC.


\section*{Acknowledgements}

We gratefully acknowledge research funding from Uber AI Labs.
MJ would like to thank Paul Szerlip for help generating the sprites used in the Mechanical Turk experiment.
AF would like to thank Patrick Rebeschini, Dominic Richards and Emile Mathieu for their help and support.
AF gratefully acknowledges funding from EPSRC grant no. EP/N509711/1.
YWT's and TR's research leading to these results has received funding from the
European Research Council under the European Union's Seventh Framework
Programme (FP7/2007-2013) ERC grant agreement no. 617071.

\bibliography{references}
\bibliographystyle{plainnat}

\clearpage

\appendix

\section{Details for variational estimators}
\label{sec:app:method}

The proofs in~\ref{sec:app:ipost} and~\ref{sec:app:imarg} are included for completeness.

\subsection{Variational posterior $\ipost$}
\label{sec:app:ipost}

We require valid approximations $\qp(\theta|y,d)$ to have the same support as $p(\theta|y,d)$. Recall
\begin{equation}
	\lowerbound(d) = \mathbb{E}_{p(y, \theta | d)}\left[ \log \frac{\qp(\theta | y, d)}{p(\theta)} \right]
\end{equation}
and
\begin{equation}
	\eig(d) = \mathbb{E}_{p(y, \theta | d)}\left[ \log \frac{p(\theta | y, d)}{p(\theta)} \right]
\end{equation}

We aim to show $\eig(d) \ge \lowerbound(d)$. Following \cite{ba}, we have
\begin{align}
	\eig(d) - \lowerbound(d)
	 = & \mathbb{E}_{p(y, \theta | d)}\left[ \log \frac{p(\theta | y, d)}{p(\theta)} - \log \frac{\qp(\theta|y,d)}{p(\theta)}\right] \\
	= & \mathbb{E}_{p(y, \theta | d)}\left[ \log \frac{p(\theta | y, d)p(\theta)}{p(\theta)\qp(\theta|y,d)} \right] \\
	 = & \mathbb{E}_{p(y|d)} \left[\mathbb{E}_{p(\theta|y,d)} \left[\log \frac{p(\theta | y,d)}{\qp(\theta | y, d)} \right]\right] \\
	 = & \mathbb{E}_{p(y|d)} \left[\text{KL}\left(p(\theta | y,d)||\qp(\theta | y, d)\right)\right] \\
	 \ge & 0.
\end{align}
To further prove that the bound is tight, we note that the penultimate term $\mathbb{E}_{p(y|d)} \left[\text{KL}\left(p(\theta | y,d)||\qp(\theta | y, d)\right)\right]$ equals $0$ if and only if  $\text{KL}\left(p(\theta | y,d)||\qp(\theta | y, d)\right) = 0$ for almost all $y$ (i.e. the union of all $y$ for which this does not hold has measure zero). The occurs if and only if $\qp(\theta|y,d) = p(\theta|y,d)$ for almost all $y,\theta$.

\subsection{Variational marginal $\imarg$}
\label{sec:app:imarg}
We now demonstrate that $\upperbound(d)$ is an upper bound on $\eig(d)$. Proceeding in the same manner as for $\ipost$, we find
\begin{align}
	\upperbound(d) - \eig(d) = & \mathbb{E}_{p(y, \theta | d)}\left[ \log \frac{p(y | \theta , d)}{\qm(y|d)} - \log \frac{p(y|\theta,d)}{p(y|d)}\right] \\
	= & \mathbb{E}_{p(y, \theta | d)}\left[ \log \frac{p(y|\theta, d)p(y|d)}{\qm(y|d)p(y|\theta,d)} \right] \\
	 = & \mathbb{E}_{p(y|d)} \left[\log \frac{p(y|d)}{\qm(y| d)} \right] \\
	 = & \text{KL}\left(p(y|d)||\qm(y| d)\right) \\
	 \ge & 0.
\end{align}
Again, the bound is tight if and only if $\qm(y|d) = p(y|d)$ almost everywhere.

\subsection{Variational NMC $\ivnmc$}
We now prove Lemma~\ref{lemma:vnmc} from the main paper, duplicating the Lemma itself below for convenience.
\lemvnmc*
\begin{proof}
Starting with proving the first result in lemma, we first recall the definition of $\upperboundvnmc(d,L)$ itself,
\begin{equation}
\upperboundvnmc(d,L) = \expect\left[ \log p(y|\theta_0,d) - \log \frac{1}{L} \sum_{\ell=1}^L \frac{p(y,\theta_\ell|d)}{q_v(\theta_\ell|y,d)} \right]
\end{equation}
where the expectation is taken over $y,\theta_{0:L} \sim p(y,\theta_0|d)\prod_{\ell=1}^L q_v(\theta_\ell|y,d)$.
We consider positive integers $L_2 \ge L_1$. We let $\delta = \upperboundvnmc(d, L_1) - \upperboundvnmc(d, L_2)$. Then,
\begin{equation}
	\delta = \mathbb{E}\left[  \log \frac{1}{L_2}\sum_{\ell=1}^{L_2} \frac{p(y,\theta_\ell|d)}{q_v(\theta_\ell|y,d)} \right]- \mathbb{E}\left[  \log \frac{1}{L_1} \sum_{\ell=1}^{L_1} \frac{p(y,\theta_\ell|d)}{q_v(\theta_\ell|y,d)} \right].
\end{equation}
We now proceed as in \cite{burda2015importance}.  Let $I_1, ..., I_{L_1}$ be distinct indices drawn uniformly from $1, ..., L_2$. Then,
\begin{equation}
	\frac{1}{L_2}\sum_{\ell=1}^{L_2} \frac{p(y,\theta_\ell)}{q_v(\theta_\ell|y,d)} = \mathbb{E}_{I_1, ..., I_{L_1}} \left[ \frac{1}{L_1} \sum_{j=1}^{L_1} \frac{p(y,\theta_{I_j})}{q_v(\theta_{I_j}|y,d)} \right]
\end{equation}
So
\begin{equation}
	\delta = \mathbb{E}\left[  \log \left(\mathbb{E}_{I_{1:L_1}} \left[ \frac{1}{L_1} \sum_{j=1}^{L_1} \frac{p(y,\theta_{I_j})}{q_v(\theta_{I_j}|y,d)} \right] \right)\right]
	- \mathbb{E}\left[ \log \frac{1}{L_1}\sum_{\ell=1}^{L_1} \frac{p(y,\theta_\ell|d)}{q_v(\theta_\ell|y,d)} \right],
\end{equation}
then by Jensen's Inequality
\begin{align}
	\delta &\ge \mathbb{E}\left[  \mathbb{E}_{I_{1:L_1}} \left[ \log \left(\frac{1}{L_1} \sum_{j=1}^{L_1} \frac{p(y,\theta_{I_j})}{q_v(\theta_{I_j}|y,d)}\right) \right] \right] - 
	\mathbb{E}\left[ \log \frac{1}{L_1}\sum_{\ell=1}^{L_1} \frac{p(y,\theta_\ell|d)}{q_v(\theta_\ell|y,d)} \right] \\
	&\ge \mathbb{E}\left[ \log \frac{1}{L_1}\sum_{\ell=1}^{L_1} \frac{p(y,\theta_\ell|d)}{q_v(\theta_\ell|y,d)} \right] - \mathbb{E}\left[ \log \frac{1}{L_1}\sum_{\ell=1}^{L_1} \frac{p(y,\theta_\ell|d)}{q_v(\theta_\ell|y,d)} \right] \\
	&\ge 0
\end{align}
where we have used that $\theta_{I_1}, ..., \theta_{I_{L_1}} \overset{d}{=} \theta_1, ..., \theta_{L_1}$. This shows that $\upperboundvnmc(d, L_1) \ge \upperboundvnmc(d, L_2)$. For the limit $\lim_{L\to\infty} \upperboundvnmc(d, L)$ we first fix some $y$ for which $p(y|d)>0$ and consider
\begin{equation}
	\upperboundvnmc(d, L, y) = \mathbb{E}\left[ \log p(y|\theta_0,d) - \log \frac{1}{L}\sum_{\ell=1}^L \frac{p(y,\theta_\ell|d)}{q_v(\theta_\ell|y,d)} \right].
\end{equation}
with the expectation taken over $p(\theta_0|y,d)\prod_{\ell=1}^L q_v(\theta_\ell|y,d)$. Since $p(y,\theta|d)/q_v(\theta|y,d)$ is bounded by assumption, the Strong Law of Large Numbers implies that, in limit of large $L$,
\begin{equation}
	\frac{1}{L}\sum_{\ell=1}^L \frac{p(y,\theta_\ell|d)}{q_v(\theta_\ell|y,d)} \to p(y|d) \; a.s.
\end{equation}
Furthermore, using the same argument as before, $\upperboundvnmc(d, L_1, y) \ge \upperboundvnmc(d, L_2, y)$ whenever $L_2 \ge L_1$. 
Thus the Bounded Convergence Theorem implies
\begin{equation}
	\upperboundvnmc(d, L, y) \downarrow \mathbb{E}_{p(\theta_0|y,d)}[\log p(y|\theta_0,d) - \log p(y|d)] \text{ as } L \to \infty
\end{equation}
so, taking expectations of $p(y|d)$, by the Monotone Convergence Theorem
\begin{equation}
	\upperboundvnmc(d, L) \downarrow \mathbb{E}_{p(y,\theta_0|d)}[\log p(y|\theta_0,d) - \log p(y|d)] = \eig(d) \text{ as } L \to \infty.
\end{equation}

For the second result, we simply note that
\begin{equation}
	\frac{p(y,\theta|d)}{p(\theta|y,d)} = \frac{p(y,\theta|d)}{\frac{p(y,\theta|d)}{p(y|d)}} = p(y|d)
\end{equation}

Finally, for the third result, we proceed as in \cite{le2017auto}. We have
\begin{align}
	\upperboundvnmc(d, L) - \eig(d) &= \mathbb{E}\left[\log p(y|d) - \log \frac{1}{L}\sum_{\ell=1}^l \frac{p(y,\theta_\ell|d)}{q_v(\theta_\ell|y,d)} \right]
\end{align}
where the expectation is over $p(y,\theta_0|d)\prod_{\ell=1}^L q_v(\theta_\ell|y,d)$.

Then
\begin{align}
	\upperboundvnmc(d, L) - \eig(d) &= \mathbb{E}\left[ - \log\frac{1}{L}\sum_{\ell=1}^L \frac{p(\theta_\ell|y,d)}{q_v(\theta_\ell|y,d)} \right] \\
	&= \mathbb{E}\left[\log \frac{\prod_{\ell=1}^{L}q_v(\theta_\ell|y,d)}{\frac{1}{L} \sum_{\ell=1}^L p(\theta_\ell|y,d)\prod_{k\ne\ell} q_v(\theta_k|y,d)} \right] \\
	&= \mathbb{E}\left[\log \frac{\prod_{\ell=1}^L q_v(\theta_\ell|y,d)}{P(\theta_{1:L}|y,d)} \right] \\
	&= \mathbb{E}_{p(y|d)}\left[\text{KL}\left(\prod_{\ell=1}^L q_v(\theta_\ell|y,d) || P(\theta_{1:L}|y,d) \right) \right]
\end{align}
where $P(\theta_{1:L}|y,d) = \frac{1}{L}\sum_{\ell=1}^L p(\theta_\ell|y,d) \prod_{k\ne\ell} q_v(\theta_k|y,d)$.
\end{proof}

\subsection{Variational marginal + likelihood $\iml$}
\label{sec:app:iml}
We now prove Lemma~\ref{lemma:ml} from the main paper, duplicating the Lemma itself below for convenience.
\lemiml*
\begin{proof}
	We aim to bound $|\Iml(d) - \eig(d)|$. Let $\delta = \Iml(d) - \eig(d)$. We have
	\begin{align}
		\delta &= \mathbb{E}_{p(y,\theta|d)}\left[\log \frac{\ql(y|\theta,d)}{\qm(y|d)} \right] - \mathbb{E}_{p(y,\theta|d)}\left[\log \frac{p(y|\theta,d)}{p(y|d)} \right] \\
		&= \mathbb{E}_{p(y,\theta|d)}\left[\log \frac{\ql(y|\theta,d)}{\qm(y|d)} - \log \frac{p(y|\theta,d)}{p(y|d)} \right] \\
		&= \mathbb{E}_{p(y,\theta|d)}\left[\log \frac{\ql(y|\theta,d)}{\qm(y|d)} - \log \frac{p(y|\theta,d)}{\qm(y|d)} + \log \frac{p(y|\theta,d)}{\qm(y|d)} - \log \frac{p(y|\theta,d)}{p(y|d)} \right] \\
		&= -\mathbb{E}_{p(y,\theta|d)}\left[\log \frac{\qm(y|d)p(y|\theta,d)}{\ql(y|\theta,d)\qm(y|d)}\right] + \mathbb{E}_{p(y,\theta|d)}\left[\log \frac{p(y|\theta,d)p(y|d)}{\qm(y|d)p(y|\theta,d)}\right] \\
		&= -\mathbb{E}_{p(\theta)}\left[\mathbb{E}_{p(y|\theta,d)}\left[\log \frac{p(y|\theta,d)}{\ql(y|\theta,d)}\right]\right] + \mathbb{E}_{p(y|d)}\left[\log \frac{p(y|d)}{\qm(y|d)}\right] \\
		&= -\mathbb{E}_{p(\theta)}\left[\text{KL}(p(y|\theta,d)||\ql(y|\theta,d))\right] + \text{KL}(p(y|d)||\qm(y|d)).
	\end{align}
	So, by the triangle inequality
	\begin{align}
		|\delta| &\le \mathbb{E}_{p(\theta)}\left[\text{KL}(p(y|\theta,d)||\ql(y|\theta,d))\right] + \text{KL}(p(y|d)||\qm(y|d)).
		\label{eq:mlklbound}
	\end{align}
	We can rewrite the RHS using the following relation
	\begin{align}
		\text{KL}(p(x)||q(x)) &= \mathbb{E}_{p(x)}\left[ \log\frac{p(x)}{q(x)} \right] \\
		&= \mathbb{E}_{p(x)}[\log p(x)] - \mathbb{E}_{p(x)}[\log q(x)] \\
		&= -H[p(x)]- \mathbb{E}_{p(x)}[\log q(x)].
	\end{align}
	This gives us
	\begin{align}
		|\delta| &\le \mathbb{E}_{p(\theta)}\left[ -H(p(y|\theta,d) \right] - \mathbb{E}_{p(y,\theta|d)}[\log \ql(y|\theta,d)] - H[p(y|d)] - \mathbb{E}_{p(y,|d)}[\log \qm(y|d)] \\
		&\le  - \mathbb{E}_{p(y,\theta|d)}[\log \qm(y|d) + \log\ql(y|\theta,d)] -H[p(y|d)] - \mathbb{E}_{p(\theta)}\left[ H(p(y|\theta,d) \right]
	\end{align}
	as required.
	
	Finally, from \eqref{eq:mlklbound} we see that the error bound is tight if and only if both KL-divergences are 0 if and only if $\ql(y|\theta,d) = p(y|\theta,d)$ and $\qm(y|d)=p(y|d)$ for almost all $y,\theta$.
\end{proof}

\label{app:qmql}
We conclude with an additional observation. Suppose that we set $\qm(y|d) = \mathbb{E}_{p(\theta)}[\ql(y|\theta,d)]$. This could be possible for instance when $\theta$ takes finitely many values. In this case, $\Iml(d)$ is actually a lower bound on $\eig(d)$. This is in contrast to the general case when $\qm$ and $\ql$ are learned separately, in which it is neither an upper nor a lower bound.

To show that $\Iml(d)$ is a lower bound when $\qm(y|d) = \mathbb{E}_{p(\theta)}[\ql(y|\theta,d)]$, we begin with the Donsker-Varadhan bound \cite{donsker1975asymptotic}
\begin{equation}
	\eig(d) \ge \mathbb{E}_{p(y,\theta|d)}[T(y,\theta)] - \log \left(\mathbb{E}_{p(\theta)p(y|d)}[e^{T(y,\theta)}] \right).
\end{equation}
Substituting $T(y,\theta) = \log(\ql(y|\theta,d)/\qm(y|d))$ we have
\begin{align}
	\eig(d) &\ge \mathbb{E}_{p(y,\theta|d)}\left[\log\frac{\ql(y|\theta,d)}{\qm(y|d)}\right] - \log \left(\mathbb{E}_{p(\theta)p(y|d)}\left[\frac{\ql(y|\theta,d)}{\qm(y|d)}\right] \right) \\
	&\ge \Iml(d) - \log \left(\mathbb{E}_{p(y|d)}\left[\mathbb{E}_{p(\theta)}\left\{\frac{\ql(y|\theta,d)}{\qm(y|d)}\right\}\right] \right) \\
	&\ge \Iml(d) - \log \left(\mathbb{E}_{p(y|d)}\left[\frac{\mathbb{E}_{p(\theta)}\left\{\ql(y|\theta,d)\right\}}{\qm(y|d)}\right] \right) \\
	&\ge \Iml(d) - \log \left(\mathbb{E}_{p(y|d)}\left[\frac{\qm(y|d)}{\qm(y|d)}\right] \right) \\
	&\ge \Iml(d).
\end{align}



\section{Details for convergence rates}
\label{sec:app:theory}

We now provide the details for Theorem~\ref{the:main}.
Key to proving the aspect of the Theorem relating to the convergence of the variational parameter
$\phi_K$ to $\phi^*$ is Assumption~\ref{ass:sgd}.
Points 1-5 correspond to assumptions H2', H3, H4, H6, and H7 of~\cite{moulines2011}; our proof will rely heavily on theirs.  
We note that also that our measurability assumption made in the Theorem itself means that their assumption H1 is automatically satisfied.

\begin{assumption}
	Assume:
	\begin{enumerate}
		\item The function $\phi \mapsto f(X, \phi)$ is almost surely convex in its second argument and differentiable with Lipschitz continuous gradient, i.e. $\forall \phi_1, \phi_2 \in \Phi$:
		\begin{equation*}
			 \expect(\|\nabla f(X, \phi_1) - \nabla f(X, \phi_2)\|^2 ) \le C \|\phi_1 - \phi_2\|
		\end{equation*}
		with probability 1 for some $C$.
		\item The function $f$ is $\nu$-strongly convex; that is, for all $\phi_1, \phi_2 \in \Phi$:
		\begin{align*}
			f(X, \phi_1) \ge f(X, \phi_2) &+ \nabla f(X, \phi_2)^T (\phi_1 - \phi_2) \\
			&+ \tfrac{\nu}{2}\|\phi_1 - \phi_2\|^2
		\end{align*}
		\item There exists $\sigma>0$ such that $\expect[\|\nabla f(X, \phi^*)\|^2) \le \sigma^2$
		\item The function $\phi \mapsto f(X, \phi)$ is almost surely twice differentiable with Lipschitz continuous Hessian $H f$, i.e. $\forall \phi_1, \phi_2 \in \Phi$:
		\begin{equation*}
			 \expect(\|(H f)(X, \phi_1) - (H f)(X, \phi_2)\| ) \le C' \|\phi_1 - \phi_2\|
			\end{equation*}
		\item There exists $\tau>0$ such that $\expect[\|\nabla f(X, \phi^*)\|^4] \le \tau^4$ and there exists a positive definite operator $\Sigma$ such that $\expect[\nabla f(X, \phi^*) \otimes \nabla f(X, \phi^*)] \preccurlyeq \Sigma$
		\item The function $\Ialone$ is Lipschitz continuous
	\end{enumerate}
	\label{ass:sgd}
\end{assumption}

It should be noted that, though relatively standard, these assumptions are also quite strong, particularly the assumption of strong convexity of $f$, and may well not hold in practice.
In short, the stochastic gradient scheme used in optimizing the bounds may only converge toward a local optimum of the bound $\phi^\dag$, rather than the global optimum $\phi^*$.
When this happens the behavior and rates of convergence will generally be the same, but the error breakdown will become
\begin{subequations}
	\begin{align}
	&\left\| \hat{\mu}(d,\phi_K) - \eig(d) \right \|_{2} \nonumber\\
	&\quad\quad\quad\quad\quad\le \left\|  \hat{\mu}(d,\phi_K) - \mathcal{B}(d,\phi_K)\right\|_{2} \\
	&\quad\quad\quad\quad\quad\phantom{\le}+ \left\| \mathcal{B}(d,\phi_K)- \mathcal{B}(d,\phi^\dag) \right\|_{2} \\
	&\quad\quad\quad\quad\quad\phantom{\le}+ \left| \mathcal{B}(d,\phi^\dag) - \eig(d) \right|.
	\end{align}
\end{subequations}
where 
\begin{align*}
\left| \mathcal{B}(d,\phi^\dag) - \eig(d) \right| \ge \left| \mathcal{B}(d,\phi^*) - \eig(d) \right|.
\end{align*}

We now present our proof for the result, repeating the Theorem itself for convenience.
\themain*
\paragraph{Proof of Theorem~\ref{the:main}}
\begin{proof} We begin by establishing the uniform convergence of $\ialone_N(\phi)$ to $\Ialone(\phi)$, for which we simply use the $L^2$ weak law of large numbers. Specifically, we let $Y_n = f(X_n, \phi)$ and $\varepsilon_N(\phi) = \|\ialone_N(\phi) - \Ialone(\phi)\|_{2}$, then
\begin{align}
	\varepsilon_N^2(\phi) &= \expect\left(\left[ \frac{1}{N}\sum_{n=1}^N (Y_n - \expect Y_n)\right]^2\right) \\
	&= \expect\left(\frac{1}{N^2}\sum_{n=1}^N (Y_n - \expect Y_n)^2\right) \\
	&= \frac{1}{N^2} \cdot N \text{Var}(Y_n) \\
	&\le \frac{1}{N} \sup_{\phi\in\Phi}\|f(X_1,\phi)\|_{2}^2
\end{align}
which is bounded by assumption. Thus
\begin{equation}
	\sup_{\phi\in\Phi} \varepsilon_N(\phi) = \mathcal{O}(N^{-1/2})
\end{equation}
as required.

We turn now to the stochastic gradient descent convergence.
We begin by applying Theorem 3 of \cite{moulines2011} using points 1-5 of Assumption~\ref{ass:sgd} to give
\begin{equation}
\label{eq:l2conv}
	\|\phi_K - \phi^*\|_{2} = \mathcal{O}(K^{-1/2})
\end{equation}
and (see \cite{moulines2011} page 4)
\begin{equation}
	\expect \Ialone(\phi_K) - \Ialone(\phi^*) = \mathcal{O}(K^{-1/2}).
\end{equation}
To establish $L^2$ convergence of the function values, it remains to control the variance of $\Ialone(\phi_K)$. We now invoke point 6 of Assumption~\ref{ass:sgd} to see that, for some constant $B$ (namely the Lipschitz constant for $\Ialone$),
\begin{align}
	\text{Var}[\Ialone(\phi_K)] &= \expect\left[\left(\Ialone(\phi_K) - \expect\left[\Ialone( \phi_K)\right]\right)^2\right] \\
	&\le \expect\left[(\Ialone(\phi_K) - \Ialone(\expect \phi_K))^2\right] \\
	&\le B^2 \expect\left[(\phi_t - \expect\phi_t)^2\right] \\
	& \le B^2 \|\phi_K - \phi^*\|^2_{2}.
\end{align}
By \eqref{eq:l2conv} we conclude $\sqrt{\text{Var}[\Ialone(\phi_K)]} = \mathcal{O}(K^{-1/2})$. Thus $\Ialone(\phi_K)$ converges in $L^2$ at the required rate.

Finally, if $\epsilon_K = \|\ialone_K(\phi_K) - \Ialone(\phi^*)\|_{2}$ then
\begin{align*}
	\epsilon_K &\le \|\ialone_K(\phi_K) - \Ialone(\phi_K)\|_{2} + \|\Ialone_K(\phi_K) - \Ialone(\phi^*)\|_{2} \\
	& \le \|\ialone_K(\phi_K) - \Ialone(\phi_K)\|_{2}+ \sup_{\phi\in\Phi}\|\ialone_K(\phi)-\Ialone(\phi)\|_{2} \\
	&= \mathcal{O}(N^{-1/2}+K^{-1/2}) \\
	&= \mathcal{O}(T^{-1/2})
\end{align*}
as required.
\end{proof}

Finally, we discuss the necessary extensions for $\Iml$. The assumptions of the Theorem are subtly different in this case. Specifically, we require Assumption~\ref{ass:sgd} to hold for the integrand of $\mathcal{F}$ rather than the integrand of $\Iml$, where $\mathcal{F}(d, \phi) = -\mathbb{E}[\log \qm(y|d) + \log \ql(y|\theta,d)] + C$ is the loss function that we use to train $\phi$, and require $\Iml$ to be Lipschitz continuous in $\phi$.

The Monte Carlo error is no different in this setting. However, $\phi^*$ is optimal with respect to $\mathcal{F}(d,\phi)$ rather than $\Iml$ and the asymptotic bias term is $|\Iml(d,\phi^*) - \eig(d)| \le \mathcal{F}(d,\phi^*)$ by Lemma~\ref{lemma:ml}. For the optimization term, we have from equation~\eqref{eq:l2conv} that $\|\phi_K - \phi^*\|_2 = \mathcal{O}(K^{-1/2})$. Then by the Lipschitz assumption on $\Iml$, we have $\|\Iml(d,\phi_k) - \Iml(d,\phi^*)\|_2 = \mathcal{O}(K^{-1/2})$. The rest of the proof now goes through as above.



\section{Related work}
\label{sec:app:relatedwork}
  
In this section, we provide a more detailed discussion of existing techniques for EIG estimation to complement Sec.~\ref{sec:experimentseig} in the main text.

One established approach is to use a \textbf{Laplace approximation} to the posterior to make fast approximations of EIG \citep{lewi2009sequential,long2013}\vspace{-10pt}
\begin{equation}
\label{eq:laplace}
\ilap(d) \triangleq \frac{1}{N}\sum_{n=1}^{N} \left[\entropy{p(\theta)} - \entropy{q(\theta|y_n,d)}\right]
\end{equation}
where $q(\theta|y_n,d)$ is a Laplace approximation to $p(\theta|y_n, d)$ 
that is computed once for each $y_n \sim p(y|d)$. 

\citet{kleinegesse2018efficient} recently suggested an implicit likelihood approach that directly approximates the ratio $r(d,\theta,y) = p(y|\theta,d)/p(y|d)$ using samples from $p(y|\theta,d)$ and $p(y|d)$ and the \textbf{Likelihood-Free Inference by Ratio Estimation (LFIRE)} method suggested by~\citep{dutta2016likelihood}, which is itself based around logistic regression.
This yields the estimator
\begin{equation}
\label{eq:lfire}
\ilfire(d) \triangleq \frac{1}{N} \sum_{n=1}^N \log \hat{r}(d,\theta_n,y_n)
\end{equation}
where $\log \hat{r}(d,\theta_n,y_n)$ is estimated separately for each pairs of samples $y_n, \theta_n$.

In principal one could also  exploit the equivalence between EIG and MI and use other existing MI estimation methods, a number of which were recently summarized by~\cite{poole2018variational}.
Of particular note,~\citet{mine} use a bound on \acrshort{MI} in the context of generative adversarial neural network training that is based on the \textbf{Donsker-Varadhan (DV)} representation of the KL divergence \citep{donsker1975asymptotic}. Specifically, they introduce a parametrized approximation $T(y,\theta|d,\phi)$ to
$\log\frac{p(y,\theta|d)}{p(\theta)p(y|d)}$ and then optimize the lower bound
\begin{align}
\label{eq:dv}
\Idv(d) \triangleq\,\, &\expect_{p(y,\theta|d)}[T(y,\theta|d,\phi)] -\log\left(\expect_{p(\theta)p(y|d)}[e^{T(y,\theta|d,\phi)}]\right).
\end{align}
The estimator $\idv$ is then produced in an analogous manner to $\ipost$.

The EIG has been applied by a number of authors in specific contexts. 
For instance, the EIG has been used to formulate acquisition functions in Bayesian optimization \cite{hernandez2014}.
More recently, \citet{ma2018eddi} used an \acrshort{EIG}-type objective to select features rather than designs for a partial VAE model. The EIG estimation exploits the model structure of the partial VAE. Additionally, and in contrast to this paper, approximations learned using the ELBO are used rather than approximations that are trained using variational objectives that are directly tied to EIG estimation. For further discussion on the implications of using the ELBO (i.e.~the reverse KL divergence) in EIG estimation settings, see Appendix~\ref{sec:reverseforward}.

As mentioned previously, mutual information bounds are of interest in traditional signal processing \cite{ba} and of increasing interest in the deep learning community \cite{poole2018variational}---although to the best of our knowledge they have not been applied to \acrshort{OED} before. Interestingly, it is lower bounds that are of primary importance in the deep learning setting because of the interplay between MI estimation and the subsequent gradient-based optimization over parameters. This is in contrast to this work, in which we maximize EIG over designs using Bayesian optimization---allowing the use of estimators such as $\iml$ that are not, in expectation, bounds.


\section{Experiment details}
\label{sec:appendix:exp}

\paragraph{Computing} All experiments were run on a machine with 32818560 kB memory, 8 Intel(R) Core(TM) i7-6700 CPU @ 3.40GHz processors, running Fedora 28, Python 3.6.8, Pytorch 1.1.0. To reproduce the results presented in the paper, see \texttt{https://github.com/ae-foster/pyro/tree/vboed-reproduce}. The methods in this paper form part of Pyro's OED support, the documentation for which is provided at \texttt{http://docs.pyro.ai/en/stable/contrib.oed.html}.

\subsection{\acrshort{EIG} estimation accuracy}
\label{sec:app:eigaccuracy}

\paragraph{A/B test} We consider a classical A/B test, commonly used in marketing and design applications.
Here the experiment design is the choice of group sizes: $n$ participants are split between groups A and B of size $n_A$ and $n - n_A$, respectively. For each participant we measure a continuous response $y$. We consider a linear data analysis model
\begin{align}
\begin{split}
	\theta \sim N(0, \Sigma_\theta) \qquad
	y | \theta, d \sim N(X_d\theta, I)
\end{split}
\end{align}
where $X_d$ is the $n \times 2$ design matrix with $(1~0)$ 
for the first $n_A$ rows and $(0~1)$
for the remainder.

In this example we set the number of participants to be $n=10$ with 11 designs ($n_A= 0, ..., 10$) and the prior covariance matrix to be
\begin{align}
	\Sigma_{\theta} &= \begin{pmatrix}
		10^2 & 0 \\
		0 & 1.82^2
	\end{pmatrix}
\end{align}
We chose families of variational distributions that include the true posterior (or true marginal). For the amortised posterior, we set $\phi = (A, \Sigma_\text{p})$ with $\phi$ trained separately for each $d$ and let
\begin{align}
	\qp(\theta | y, d, \phi) &\sim N(Ay, \Sigma_\text{p}) 
\end{align}
where $A$ is a $10 \times 2$ matrix and $\Sigma_\text{p}$ is positive definite. For the marginal, we simply take $\phi = (\mu_\text{m}, \Sigma_\text{m})$ and
\begin{equation}
	\qm(y|d,\phi) \sim N(\mu_\text{m}, \Sigma_\text{m}).
\end{equation}

For NMC and Laplace, no variational families need to be specified.

For LFIRE, we used a parametrization $\phi = (b, \delta, \Lambda)$ and used the ratio estimate
\begin{equation}
	\log \hat{r}(y|\theta, d,\phi) = b - (y - \delta)^T\Lambda(y - \delta)
\end{equation}
where $\Lambda$ is positive definite. This form was chosen to mimic the approximation made by the posterior method, and so reduce the effect of architecture on performance.

For DV, we used a similar critic, namely we set $\phi = (A, \Lambda)$ and 
\begin{equation}
	T(y,\theta|d,\phi) = -(\theta - Ay)^T\Lambda(\theta-Ay)
\end{equation}
where $\Lambda$ is positive definite.

The ground truth $\eig(d)$ was computed analytically. In Table~\ref{tab:abserrors}, each estimator was allowed 10 seconds computation.

\paragraph{Preference} 
\label{sec:locfind}

We  consider searching for an agent's utility indifference point, using responses that are both \textit{censored} and \textit{corrupted} with non-uniform noise.
Let $d \in \mathbb{R}$ and
\begin{align}
\begin{split}
	\theta &\sim N(\mu_\theta, \sigma_\theta^2) \\
	\eta|\theta,d &\sim N(d - \theta, \sigma_\eta^2(1+|d|)^2) \\
	y &= f(\eta)
\end{split}
\end{align}
where
\begin{align}
	f : \R &\to [\epsilon, 1 - \epsilon] \\
	x &\mapsto \begin{cases}
		\epsilon & \text{ if } x \le \text{logit}(\epsilon) \\
		1 - \epsilon & \text{ if } x \ge \text{logit}(1-\epsilon) \\
		\frac{1}{1 - e^{-x}} & \text{ otherwise}
	\end{cases}
	\label{eq:censoredsigmoid}
\end{align}
and $\text{logit}(p) = \log p - \log (1-p)$.

For this example we set $\mu_\theta=-20,\sigma_\theta=20$ and $\sigma_\eta=1$. We took designs on a linearly spaced grid in $[-80,80]$.
For the variational family for the posterior, we took $\phi = (w, \sigma, \mu_{0},\sigma_{0},\mu_{1},\sigma_{1})$ and then
\begin{align}
	\qp(\theta | y, d, \phi) &\sim N(\mu_\text{p}, \sigma_\text{p}^2) \qquad 
	\text{where }\qquad \hat{\eta} = d - \text{logit}(y) \\
	\begin{split}
	\mu_\text{p} &= w\hat{\eta} + (1-w)\mu_\theta  + \mu_0 \1_{\{y=\epsilon\}}  + \mu_1 \1_{\{y=1-\epsilon\}} \end{split}\\
	\begin{split}
	\sigma_\text{p}^2 &= \sigma^2  + \sigma_0^2 \1_{\{y=\epsilon\}} + \sigma_1^2 \1_{\{y=1-\epsilon\}} \end{split}
\end{align}
For the marginal, we simply took $\phi = (\mu_\text{m}, \sigma_\text{m})$ and
\begin{equation}
	\qm(y|d,\phi) \sim f\#N(\mu_\text{m}, \sigma_\text{m}^2).
\end{equation}
where $\#$ denotes the push-forward measure. We note that this variational family contains the true marginal.

For LFIRE, we used the parametrization $\phi = (b, b_0, b_1, \delta, \lambda)$ with ratio estimate
\begin{align}
    \hat{\eta} &= d - \text{logit}(y) \\
	\begin{split}
    \log \hat{r}(y|\theta,d,\phi) &= b - \lambda(\hat{\eta}-\delta)^2 + b_0 \1_{\{y=\epsilon\}} + b_1 \1_{\{y=1-\epsilon\}} \end{split}
\end{align}

For DV, the critic had parametrization $\phi = (b_0, b_1, \delta_i, \delta_0, \delta_1, \lambda_i, \lambda_0, \lambda_1)$ and we set
\begin{align}
\hat{\eta} &= d - \text{logit}(y) \\
\lambda &= \lambda_i + \lambda_0 \1_{\{y=\epsilon\}} + \lambda_1 \1_{\{y=1-\epsilon\}} \\
\delta &= \delta_i + \delta_0 \1_{\{y=\epsilon\}} + \delta_1 \1_{\{y=1-\epsilon\}} \\
	\begin{split}
    T(y,\theta|d,\phi) &= - \lambda(\hat{\eta}-\delta)^2 + b_0 \1_{\{y=\epsilon\}} +\, b_1 \1_{\{y=1-\epsilon\}} \end{split}
\end{align}
Both these forms were chosen to minimize the differences between the functional forms used for different methods.

The ground truth $\eig(d)$ was computed by running the marginal method, which is statistically consistent for this example because the true marginal is contained in the variational family, to convergence. The posterior and Laplace 
methods are both asymptotically biased (see Figure~\ref{fig:locfind}) and in this case both make the same (Gaussian) distributional assumption. The posterior method, however, produces better \acrshort{EIG} estimates. For the benchmarking results in Table~\ref{tab:abserrors}, 10 seconds computation was allowed.

\begin{figure}[t]
  \centering
  {\includegraphics[width=0.75\textwidth]{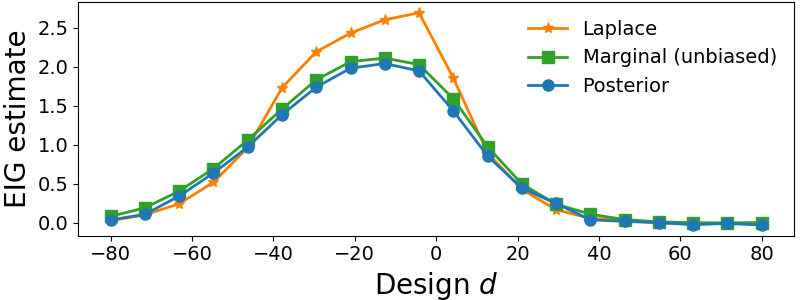}}
  \caption{EIG curves for the Preference example, with estimators run until variance is negligible and iterates of $\phi$ are stable to highlight the asymptotic bias.}
  \label{fig:locfind}
\end{figure}

\paragraph{Mixed Effects Regression}

We consider BOED for a mixed effects regression model with a non-linear linking function that
will also serve as the basis for the adaptive experiment we run in Sec.~\ref{sec:mturk}.
This class of models is commonly used for analyzing data in a variety of scientific disciplines,
where including nuisance variables can be a critical component of the model.
In our adaptive experiment, the nuisance variables---i.e.~the random effects---are used
to account for the variability of individual human participants. 
Because of the presence of nuisance variables these implicit likelihood models represent
a significant challenge for BOED.

We begin by describing the experiment set-up. Participants were presented with a question of the form seen in Figure~\ref{fig:screenshot} with the possible images shown in Figure~\ref{fig:sprites}. There were two image feature dimensions with 3 levels each. A single image $i$ could therefore be represented as a $1 \times 6$ matrix $X_i$ with two entries 1 and the rest 0. With the left image $i_1$ and right image $i_2$, the question was represented as $X_d = X_{i_1} - X_{i_2}$ encoding the assumed left-right symmetry. We then considered a model for the $i$th participant
\begin{align}
	\theta &\sim N(0, \Sigma_\theta) \\
	\sigma_\psi^{-2} &\sim \Gamma(\alpha_\psi, \beta_\psi) \\
	\psi_i|\sigma_\psi &\sim N(0, \sigma_\psi^2 I_6) \\
	\sigma_k^{-2} &\sim \Gamma(\alpha_k, \beta_k) \\
	\log k_i|\sigma_k &\sim  N(0, \sigma_k^2) \\
	\eta|\theta,\psi_i,k_i,d &\sim N(k_i(X_d\theta + X_d\psi_i), \sigma_\eta^2) \\
	y &= f(\eta)
\end{align}
where $f$ is the censored sigmoid defined in \eqref{eq:censoredsigmoid} and $i\in \{1, ..., 8\}$ as there were 8 different participants.

The actual prior values of the parameters used were
\begin{align}
    \Sigma_\theta &= 100 I_6 \qquad \qquad  \sigma_\eta = 10 \\
    \alpha_\psi &= \beta_\psi = 
	\alpha_k = \beta_k = 2 
\end{align}

\begin{figure*}[p]\centering
	\includegraphics[scale=.3]{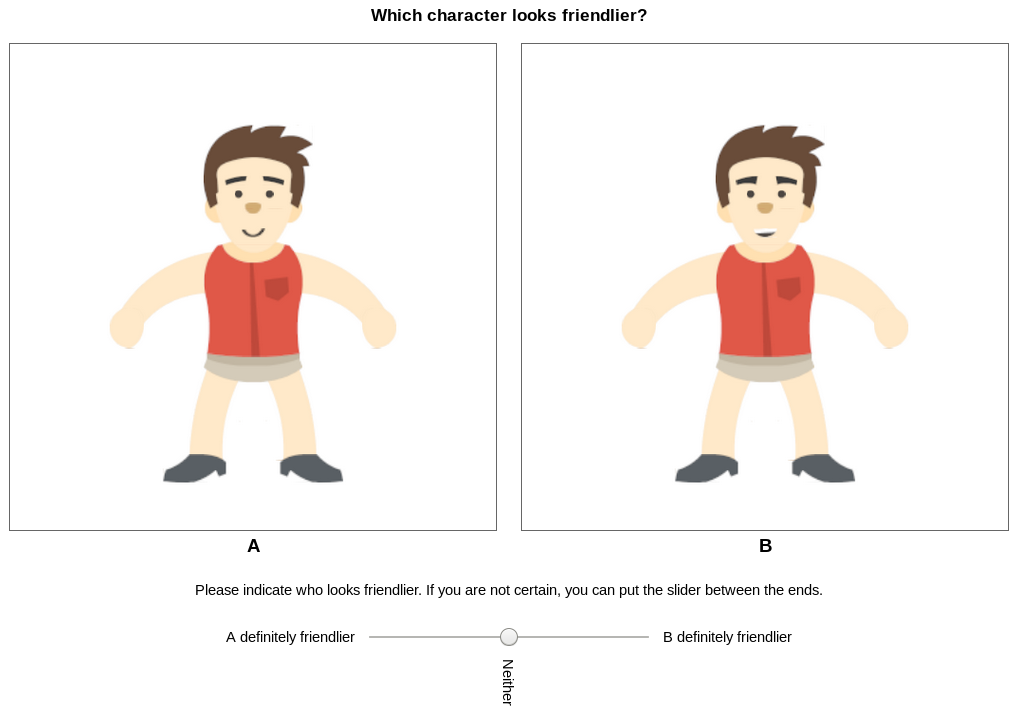}
        \caption{A screenshot of the question answering interface used by human participants in the adaptive experiment in Sec.~\ref{sec:mturk}.}
	\label{fig:screenshot}
\end{figure*}

\begin{figure*}[p]
\centering
\includegraphics[scale=0.15]{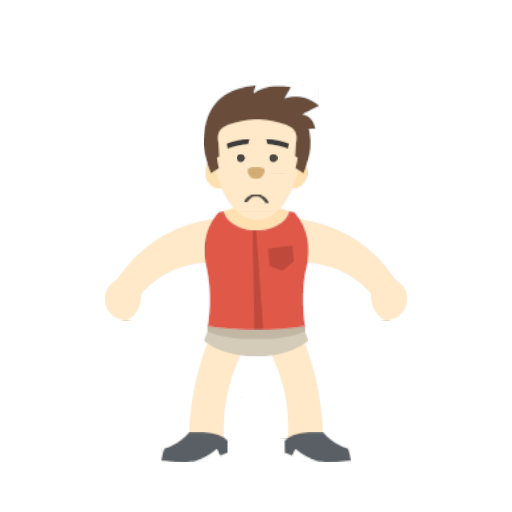}
\includegraphics[scale=0.15]{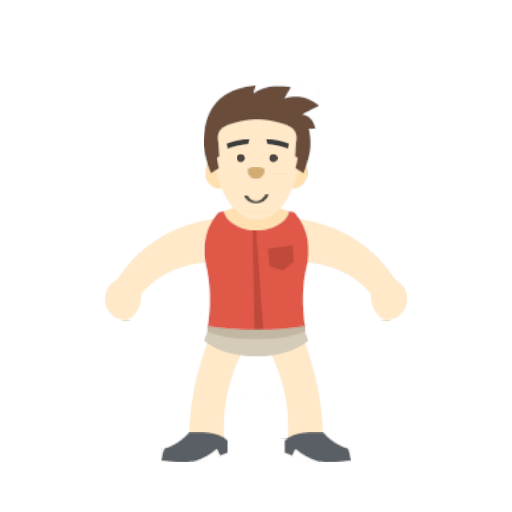}
\includegraphics[scale=0.15]{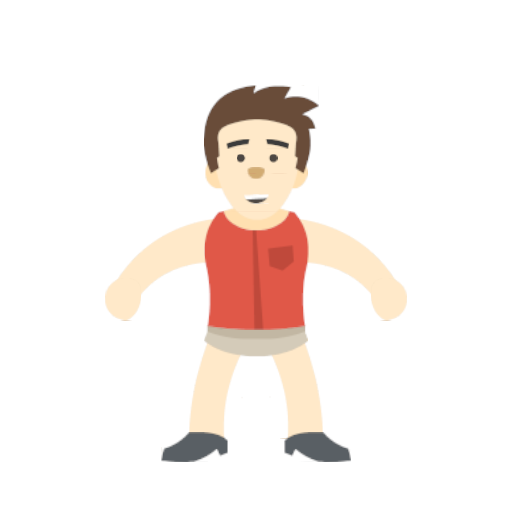}
\includegraphics[scale=0.15]{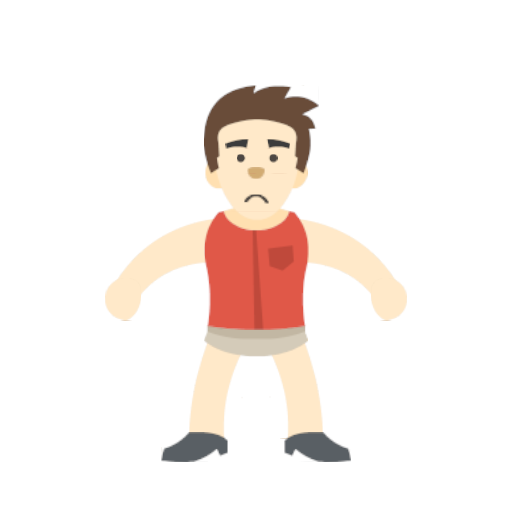}
\includegraphics[scale=0.15]{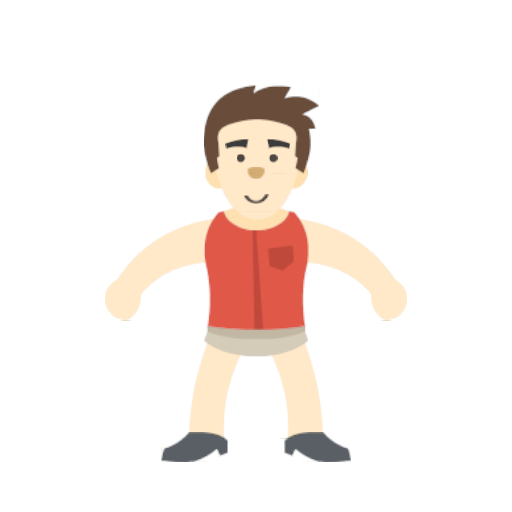}
\includegraphics[scale=0.15]{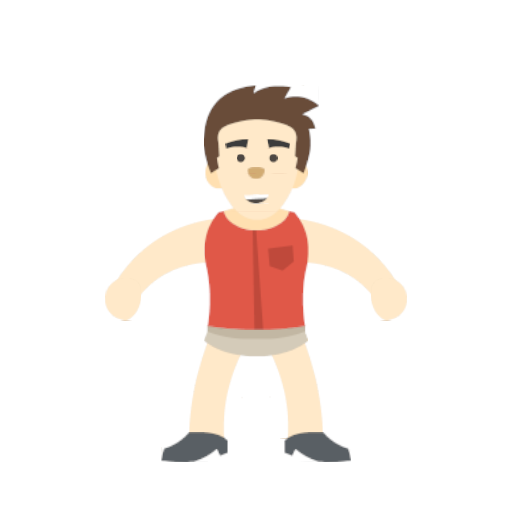}
\includegraphics[scale=0.15]{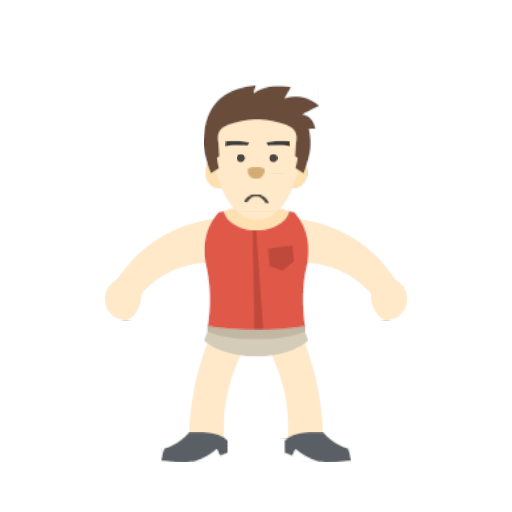}
\includegraphics[scale=0.15]{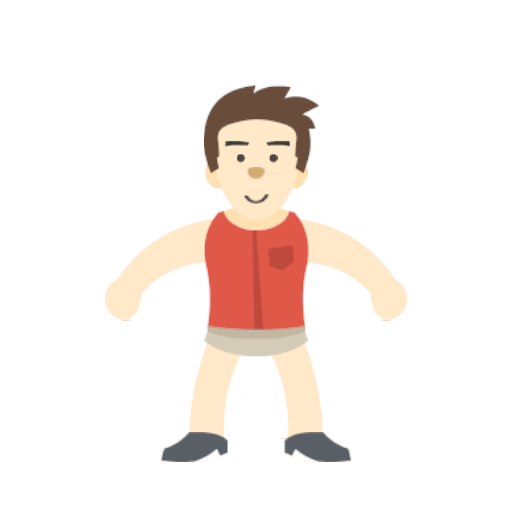}
\includegraphics[scale=0.15]{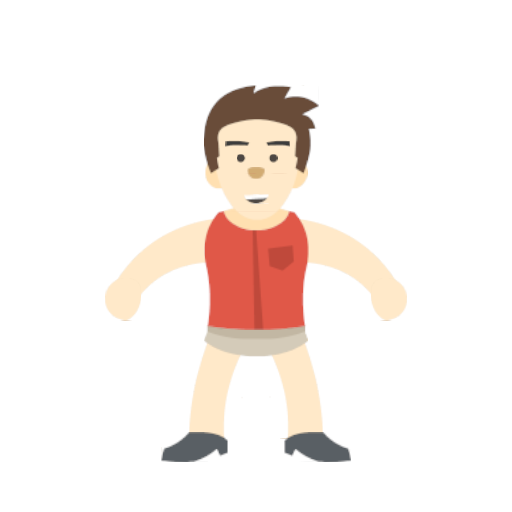}
        \caption{The nine characters we used in the adaptive experiment in Sec.~\ref{sec:mturk}. They vary along two feature dimensions: the mouth (smile, frown, showing teeth) and eyebrows.}
	\label{fig:sprites}
\end{figure*}

We begin by discussing the variational families used to estimate the \acrshort{EIG}.

For the posterior estimator of \acrshort{EIG}, we took $\phi = (A, \Sigma_\text{p})$ and 
\begin{align}
	\hat{\eta} &= \text{logit}(y) \\
	\qp(\theta|y,d,\phi) &\sim N(A\hat{\eta}, \Sigma_\text{p})
\end{align}

For the marginal + likelihood estimator, we set $\phi = (\mu_\text{m}, \sigma_\text{m}, \mu_\ell, \sigma_\ell, \xi)$ and took
\begin{align}
	\qm(y|d,\phi) &\sim f\#N(\mu_\text{m}, \sigma_\text{m}^2) \\
	\ql(y|\theta,d,\phi) &\sim f\#N(e^\xi X_d\theta + \mu_\ell, \sigma_\ell^2)
\end{align}

For LFIRE, we used $\phi = (b, \delta, \lambda)$ and then took
\begin{align}
	\hat{\eta} &= \text{logit}(y) \\
	\log \hat{r}(y|\theta,d,\phi) &= b - \lambda(\hat{\eta}-\delta)^2
\end{align}

For DV, we used $\phi = (\lambda,\xi)$ and
\begin{align}
	\hat{\eta} &= \text{logit}(y) \\
	T(y,\theta|d,\phi) &=  - \lambda(\hat{\eta} - e^\xi X_d\theta)^2
\end{align}

For benchmarking, we computed the ground truth using a variant of \acrshort{NMC}. Specifically, we note that
\begin{align}
	p(y|d) &= \expect_{p(\theta,\psi,k)}[p(y|\theta,\psi,k,d)] \\
	p(y|\theta,d) &= \expect_{p(\psi,k)}[p(y|\theta,\psi,k,d)]
\end{align}
and for this model, we can sample directly from $p(\psi,k)$. These identities allow us to estimate the marginal and likelihood by \acrlong{MC}, and then combine in a \acrshort{NMC} estimator for $\eig(d)$. Whilst inefficient, this estimator is statistically consistent.

We allowed 60 seconds computation per estimator to compute the results of Table~\ref{tab:abserrors}. Encouragingly, we find that our variational estimators outperform the LFIRE and DV 
baselines on this model and exhibit low errors even though
they both make suboptimal distributional assumptions about the posterior/marginal.

\paragraph{Extrapolation} 
We consider designing experiments to reduce posterior uncertainty in the model prediction at another point in design space---a point that we cannot experiment on directly. For this example, we take $\psi \sim N(\mu_\psi, \Sigma_\psi)$ and
\begin{align}
\begin{split}
\nonumber
	\theta | \psi  &\sim \text{Bernoulli}(\text{logit}^{-1}(X_\theta \psi)) \\
	y | \psi, d  &\sim \text{Bernoulli}(\text{logit}^{-1}(X_d \psi))
\end{split}
\end{align}
where $X_\theta = \begin{pmatrix}
1 & -\tfrac{1}{2}
\end{pmatrix}$ and $X_d = \begin{pmatrix}
-1 & d
\end{pmatrix} \text{ for } d \in \mathbb{R}$. Interestingly, this model admits efficient sampling of $y,\theta\sim p(y,\theta|d)$ but \textit{not} $y \sim p(y|\theta,d)$. Therefore, whilst the posterior, marginal + likelihood and DV methods are all applicable, LFIRE is not.

For the posterior method we set $\phi = (l_0, l_1)$ and
\begin{align}
	l_\text{p}(y) &= l_1y + l_0(1-y) \\
	\qp(\theta|y,d,\phi) &\sim \text{Bernoulli}(\text{logit}^{-1}(l_\text{p}(y))).
\end{align}
We computed the prior entropy, which is not analytically tractable here, using a \acrshort{MC} estimator, noting that $\theta$ has a finite sample space.

For the marginal + likelihood method, we let $\phi = (l, l_0, l_1)$ and then
\begin{align}
	\qm(y|d,\phi) &\sim \text{Bernoulli}(\text{logit}^{-1}(l)) \\
	l_\ell(\theta) &= l_1\theta + l_0(1-\theta) \\
	\ql(y|\theta,d,\phi) &\sim \text{Bernoulli}(\text{logit}^{-1}(l_\ell(\theta))).
\end{align}

Finally, for DV, we let $\phi = (w_y, w_\theta, w_{y\theta})$ and took
\begin{equation}
	T(\theta,y|d,\phi) = w_yy + w_\theta \theta + w_{y\theta}y\theta.
\end{equation}

The ground truth \acrshort{EIG} was computed using \acrshort{MC}, noting that the sample spaces for $y,\theta$ are finite in this example. 10 seconds computation per methods was allowed for the results in Table~\ref{tab:abserrors}.

\subsection{End-to-end sequential experiments}

\paragraph{Mechanical Turk experiment}
\label{sec:mturkmodel}

\begin{figure}
	\begin{center}
		\includegraphics[width=.4\textwidth]{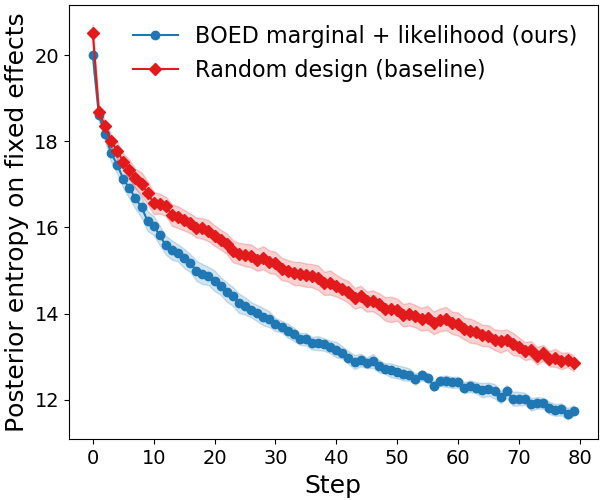}
		\caption{Evolution of the posterior entropy of the fixed effects in the Mechanical Turk experiment in Sec.~\ref{sec:mturk} with simulated data. We depict the mean and $\pm1$ std.~err.~from 10 experimental trials.}
		\label{fig:turkexperimentsimul}
	\end{center}
\end{figure}

We begin by describing the experiment itself. Participants were presented with a question of the form seen in Figure~\ref{fig:screenshot} with the possible images shown in Figure~\ref{fig:sprites}. There were two image feature dimensions with 3 levels each. A single image $i$ could therefore be represented as a $1 \times 6$ matrix $X_i$ with two entries 1 and the rest 0. With the left image $i_1$ and right image $i_2$, the question was represented as $X_d = X_{i_1} - X_{i_2}$ encoding the assumed left-right symmetry.

The model and \acrshort{EIG} estimation were the same as the mixed effects model in Sec.~\ref{sec:app:eigaccuracy}. When optimizing the \acrshort{EIG} to select designs $d_t$, we estimated \acrshort{EIG} across all candidate designs. We allowed a 30s turnaround to learn the posterior from the previous data, estimate the EIG, select the next design, and present it to the user. We estimated the \acrshort{EIG} in parallel for all 36 designs to select the best design at each step. For each independent run of the experiment there were 8 participants, each answering 10 questions. This allowed the interplay between fixed effects and random effects to be apparent. 

Because we used this model to run an adaptive experiment, we required a variational family to learn the full posterior (over random effects and hyperparameters as well as $\theta$).

For the full variational inference of the posterior used when we receive actual data, we used a partial mean-field approximation. Specifically, we set $q(\theta,\sigma_\psi,(\psi_i)_{i=1}^8, \sigma_k,(k_i)_{i=1}^8)$ to be
\begin{align}
	\theta &\sim N(\mu_\theta, \Sigma_\theta) \\
	\sigma_\psi^{-2} &\sim \Gamma(\alpha_\psi, \beta_\psi) \\
	\psi_i|\theta &\sim N(A(\theta-\mu_\theta) + \mu_{\psi_i}, \Sigma_{\psi_i}) \\
	\sigma_k^{-2} &\sim \Gamma(\alpha_k, \beta_k) \\
	\log k_i &\sim  N(\mu_{k_i}, \sigma_{k_i}^2)
\end{align}
and we learned the variational parameters $\mu_\theta, \Sigma_\theta, \alpha_\psi, \beta_\psi, A, \mu_{\psi_i}, \Sigma_{\psi_i}, \alpha_k, \beta_k, \mu_{k_i}, \sigma_{k_i}$ by conventional (not amortized) variational inference. Note that, under this approximate posterior, $\theta$ is multivariate Gaussian so we can compute its entropy analytically.

Finally we ran an additional experiment identical to the first, but using simulated data rather than human responses. We took
\begin{equation}
	\theta = \begin{pmatrix}
		-30 & 30 & 0 & -12 & -6 & 18
	\end{pmatrix}.
\end{equation}
We simulated the random effects $\psi, k$ from the prior and used the prior value $\sigma_\eta=10$. The entropy results are presented in Figure~\ref{fig:turkexperimentsimul}. As expected, \acrshort{OED} decreases posterior uncertainty more quickly.

\subsection{Constant Elasticity of Substitution (CES) experiment}
We begin by describing the experiment set-up. The economic agent is presented with a sequence of designs $d$. Each designs comprises two baskets $\mathbf{x}$ and $\mathbf{x}'$ of goods. The agent then indicates which basket they prefer on a one-dimensional slider---they may indicate a strong preference, weak preference, or indifference.

To model the agent's responses, we use the CES utility model \cite{arrow1961capital} which defines a utility
\begin{equation}
	U(\mathbf{x}) = \left( \sum_i x_i^\rho \alpha_i\right)^{1/\rho}
\end{equation}
for a basket of goods $\mathbf{x}$. In this experiment, we took baskets $\mathbf{x} \in [0,100]^3$ representing non-negative quantities of three commodities.

Extending the preference example in the previous section, we assume the agent, when asked to compare baskets $\mathbf{x}$ and $\mathbf{x}'$ and indicate their preference on a slider, base their response on $U(\mathbf{x}) - U(\mathbf{x}')$. Specifically, we use the following likelihood model
\begin{align}
	\rho &\sim \text{Beta}(a_\rho, b_\rho) \\
	\bm{\alpha} &\sim \text{Dirichlet}(\bm{c}_{\bm{\alpha}}) \\
	\log u & \sim N(\mu_u, \sigma_u^2) \\
	\eta | \rho,\bm{\alpha},d &\sim N(u\cdot (U(\mathbf{x}) - U(\mathbf{x}')), \sigma_\eta^2 u^2 (1 + \|\mathbf{x} - \mathbf{x}'\|)^2) \\
	y &= f(\eta)
\end{align}

This represents a challenging experiment design problem for a number of reasons. First, for large values of $U(\mathbf{x}) - U(\mathbf{x}')$ the agent's response will be predictable gaining little information. For very different baskets ($\|\mathbf{x} - \mathbf{x}'\|$ large) the responses will be noisy indicating our intuition that it is more difficult to compare very different baskets. However, very similar baskets will have similar utilities and the agent will be predictably indifferent. Optimal designs therefore lie in a sweet spot where: i) baskets are similar to avoid high noise regions, but dissimilar enough to be informative; and ii) the difference in utility is close to 0 under the current posterior. \acrshort{OED} is able to trade off these considerations in a principled manner.

For this specific example we took
\begin{align}
	a_\rho &= b_\rho = 1  \qquad \qquad
	\bm{c}_{\bm{\alpha}} = (1, 1, 1) \\
	\mu_u &= 1  \qquad \qquad \qquad \;
	\sigma_u = 3 \\
	\sigma_\eta &= 0.005
\end{align}

To estimate the EIG, we used a marginal guide based on the one used in the preference example. Specifically, we set $\phi = (\mu_\text{m}, \sigma_\text{m}, p_0, p_1)$ and
\begin{align}
	r(y|d,\phi) &\sim f\#N(\mu_\text{m}, \sigma_\text{m}^2), \\
	\qp(y|d,\phi) &= \begin{cases}
		\epsilon & \text{with probability } p_0 \\
		1 - \epsilon & \text{with probability } p_1 \\
		r(y|d,\phi) & \text{with probability } 1 - p_0 - p_1
	\end{cases}
\end{align}
where $\#$ denotes the push-forward measure. This is simply a mixture of a discrete distribution on end-points with a sigmoid transformed Gaussian. 

To select designs, we used Bayesian optimization with a Matern52 kernel with lengthscale 20 and variance set empirically. Both $\imarg$ and $\inmc$ were allowed the same time budget to select designs and used an identical Bayesian optimization procedure. Random designs were chosen uniformly on $[0, 100]^6$.

To learn the posterior at subsequent steps we used a mean-field variational approximation with the same families as the prior. That is, we updated the parameters $a_\rho,b_\rho,\bm{c}_{\bm{\alpha}}, \mu_u,\sigma_u$ and left the structure otherwise intact. The RMSEs of Figure~\ref{fig:ces} were expectations over the posterior: $\left(\mathbb{E}_{p(\theta|d_{1:t},y_{1:t})}[\|\theta - \theta^*\|^2]\right)^{1/2}$.



\section{Additional experiments}

\subsection{Death process}
\label{sec:deathprocess}

\begin{figure*}[h]
	\centering
	    	\begin{subfigure}[b]{0.23\textwidth}
	    		\centering
	    		\includegraphics[width=\textwidth,trim={3mm 2mm 1.5mm 2mm}]{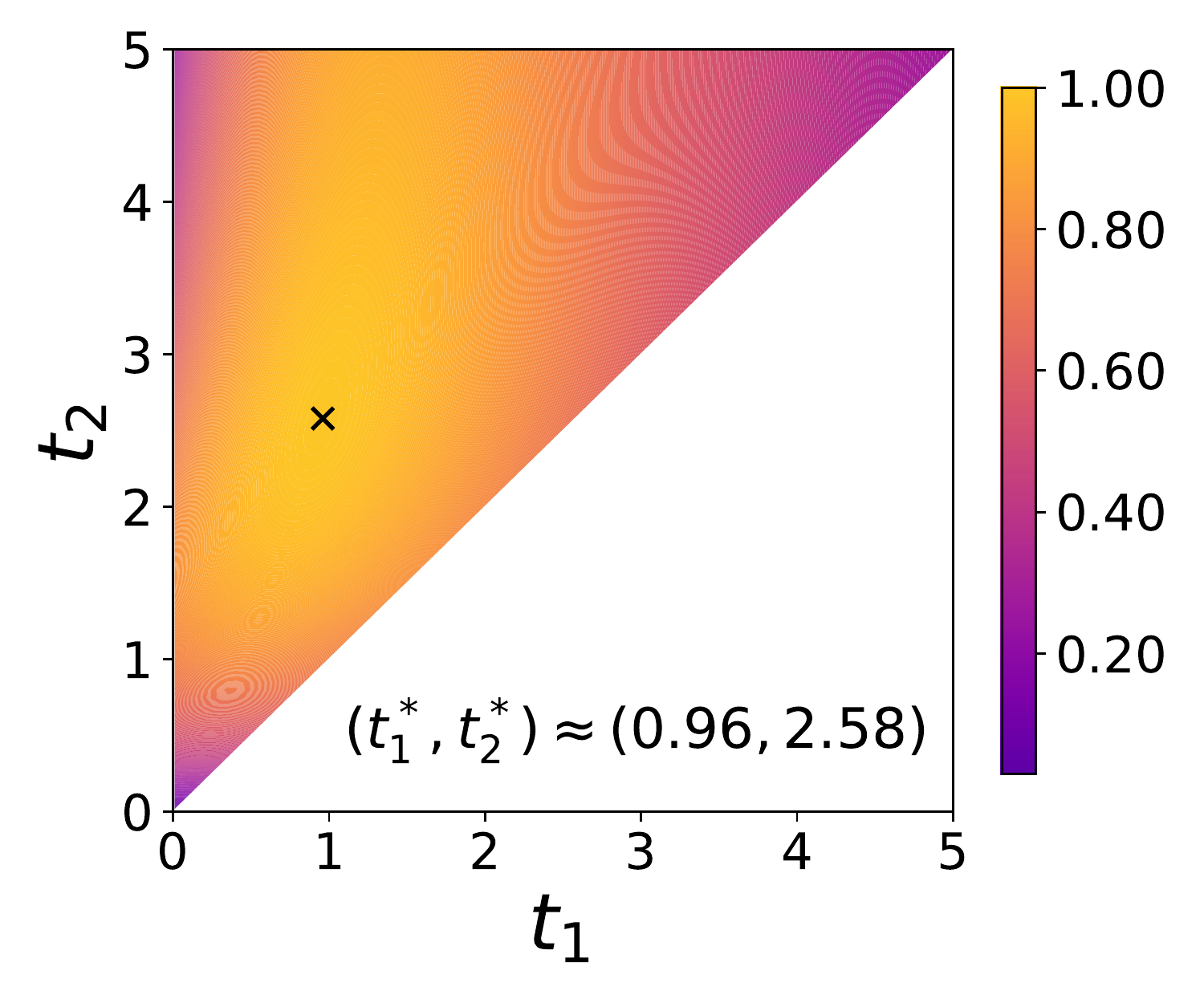}
	    		\caption{Exact EIG}
	    	\end{subfigure}
	    	~
	    	\begin{subfigure}[b]{0.23\textwidth}
	    		\centering
	    		\includegraphics[width=\textwidth,trim={3mm 2mm 1.5mm 2mm}]{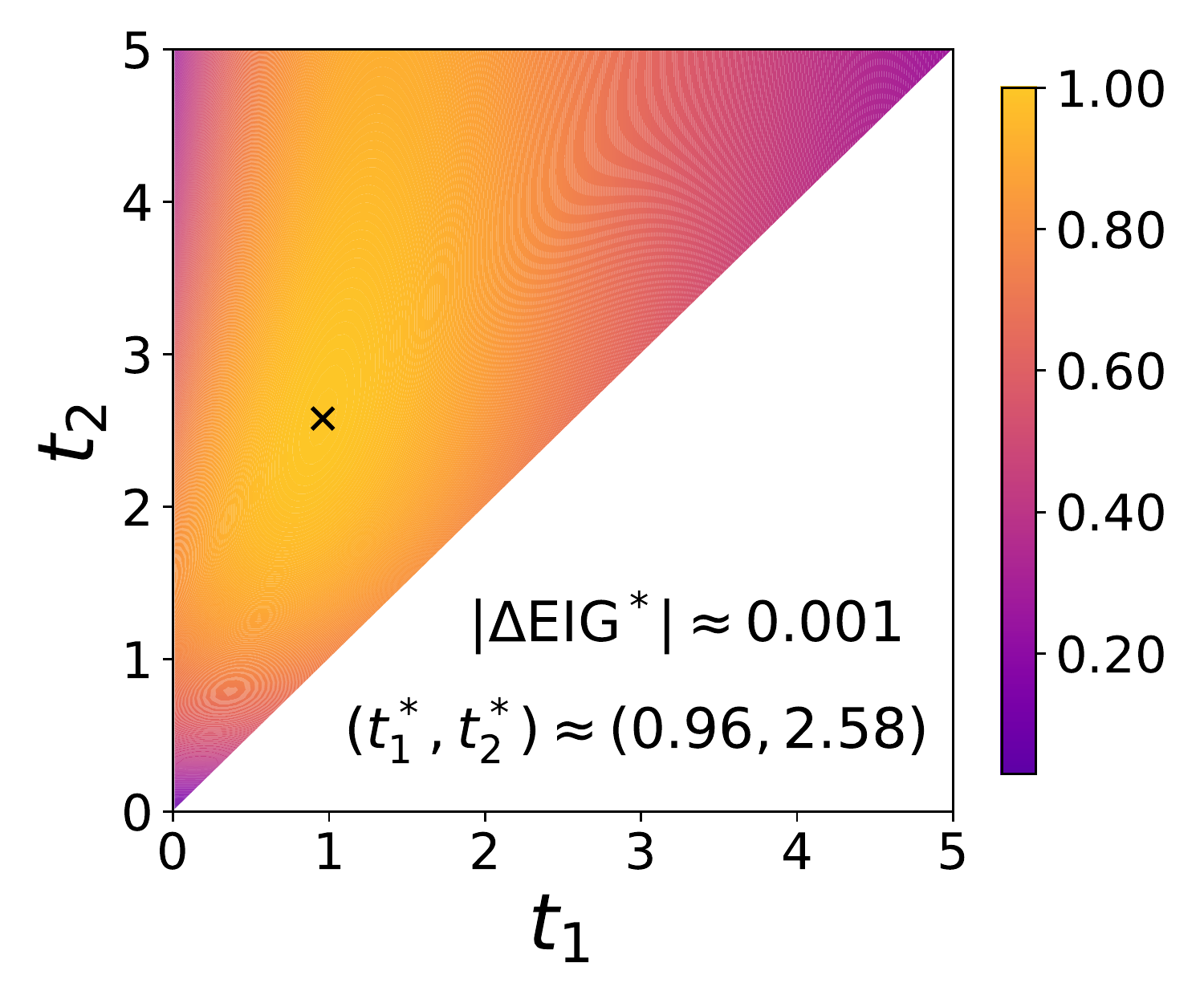}
	    		\caption{Posterior LogNormal}
	    	\end{subfigure}
	    	~
	    	\begin{subfigure}[b]{0.23\textwidth}
	    		\centering
	    		\includegraphics[width=\textwidth,trim={3mm 2mm 1.5mm 2mm}]{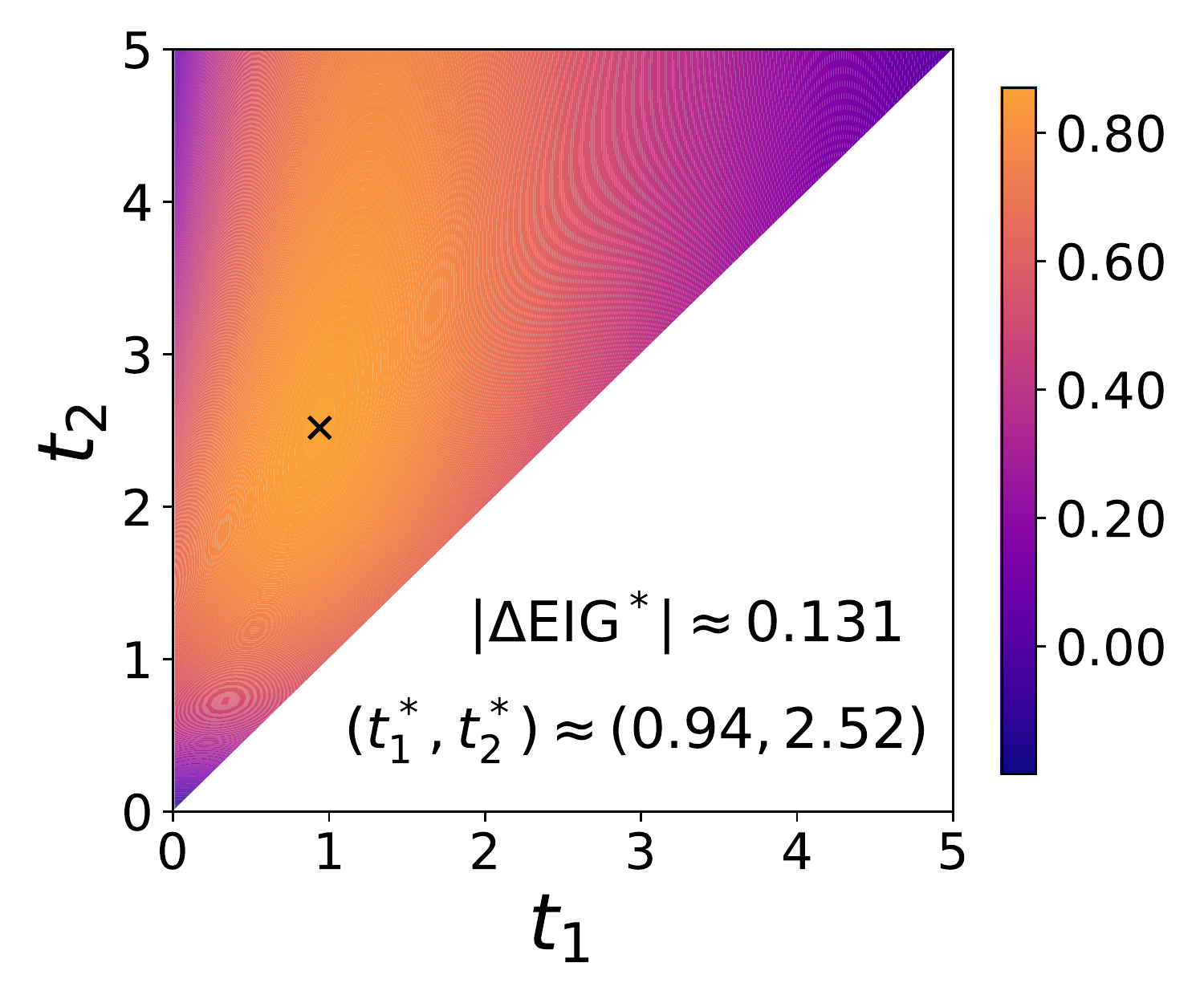}
	    		\caption{Truncated Normal}
	    	\end{subfigure}
	    	~
	    	\begin{subfigure}[b]{0.23\textwidth}
	    		\centering
	    		\includegraphics[width=\textwidth,trim={3mm 2mm 1.5mm 2mm}]{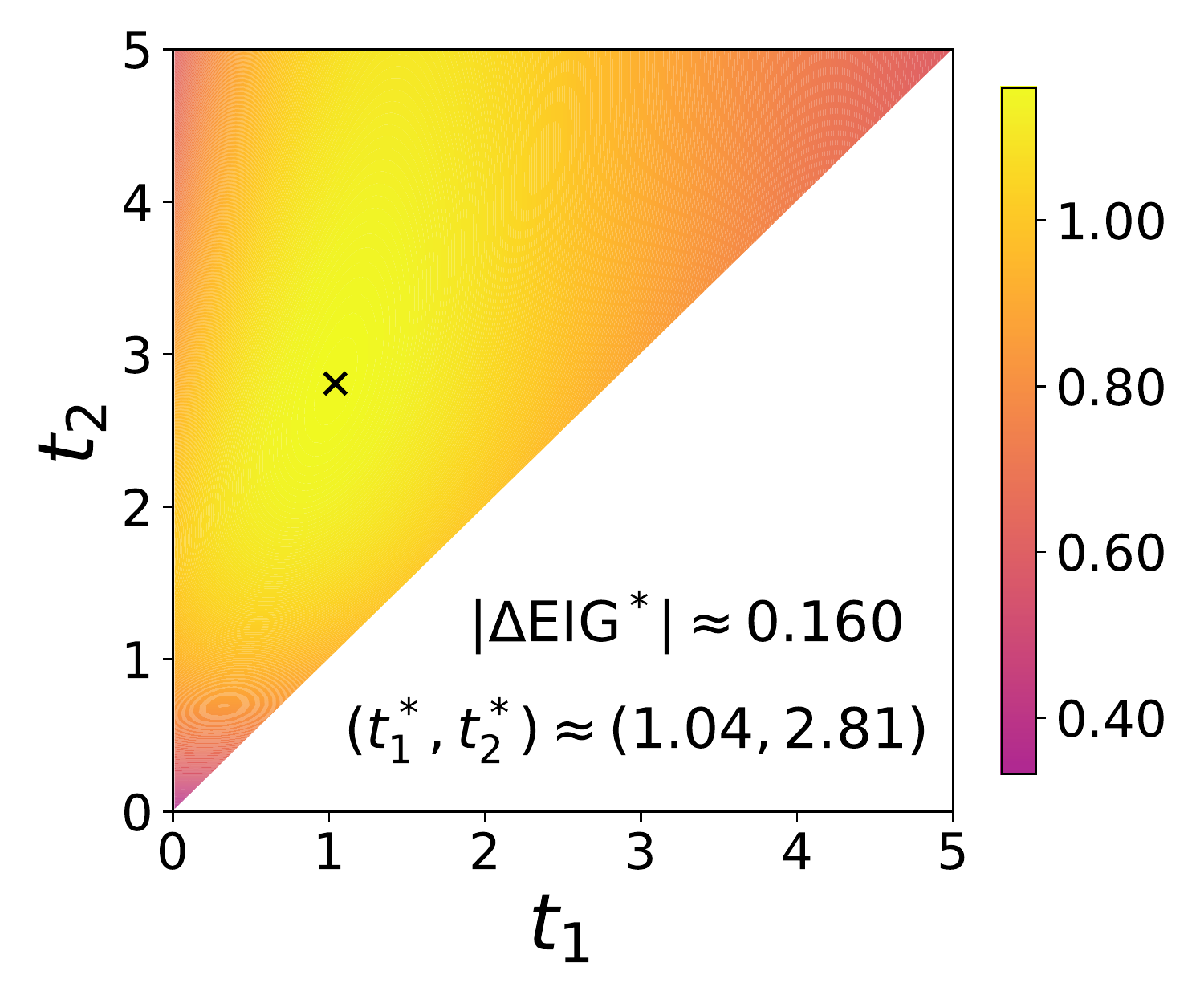}
	    		\caption{Laplace}
	    	\end{subfigure}
	\vspace{-4pt}
	\caption{EIG surfaces estimated by four methods for the two-dimensional design $(t_1, t_2)$ for the continuous time model described in Sec.~\ref{sec:deathprocess}. 
		The optimal design $(t_1^*, t_2^*)$ determined by each method is indicated with a cross. The posterior method with a LogNormal variational distribution yields nearly exact results. The posterior method with a Truncated Normal distribution and the Laplace method are not as accurate but still result in designs with large EIG.
		Note that the EIG has been scaled for interpretability and that all four figures use a common scale. The errors of these estimators are examined more closely in Figure~\ref{fig:diseasediff}.
		\label{fig:disease}}
\end{figure*}

\begin{figure*}[t]
  \centering
  	\begin{subfigure}[b]{0.31\textwidth}
  		\centering
  		\includegraphics[width=\textwidth]{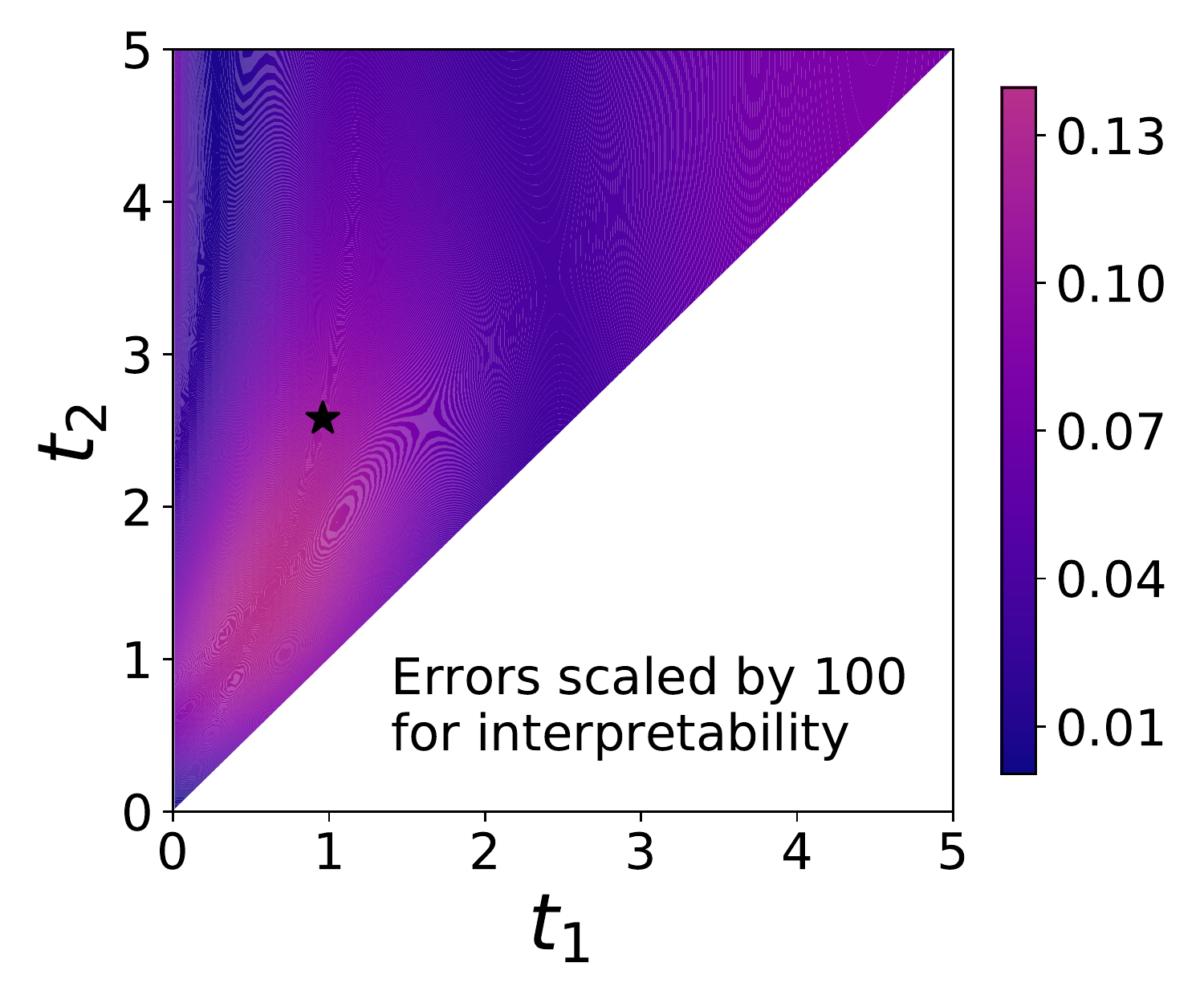}
  		\caption{Posterior LogNormal}
  	\end{subfigure}
  	~~
    	\begin{subfigure}[b]{0.31\textwidth}
    		\centering
    		\includegraphics[width=\textwidth]{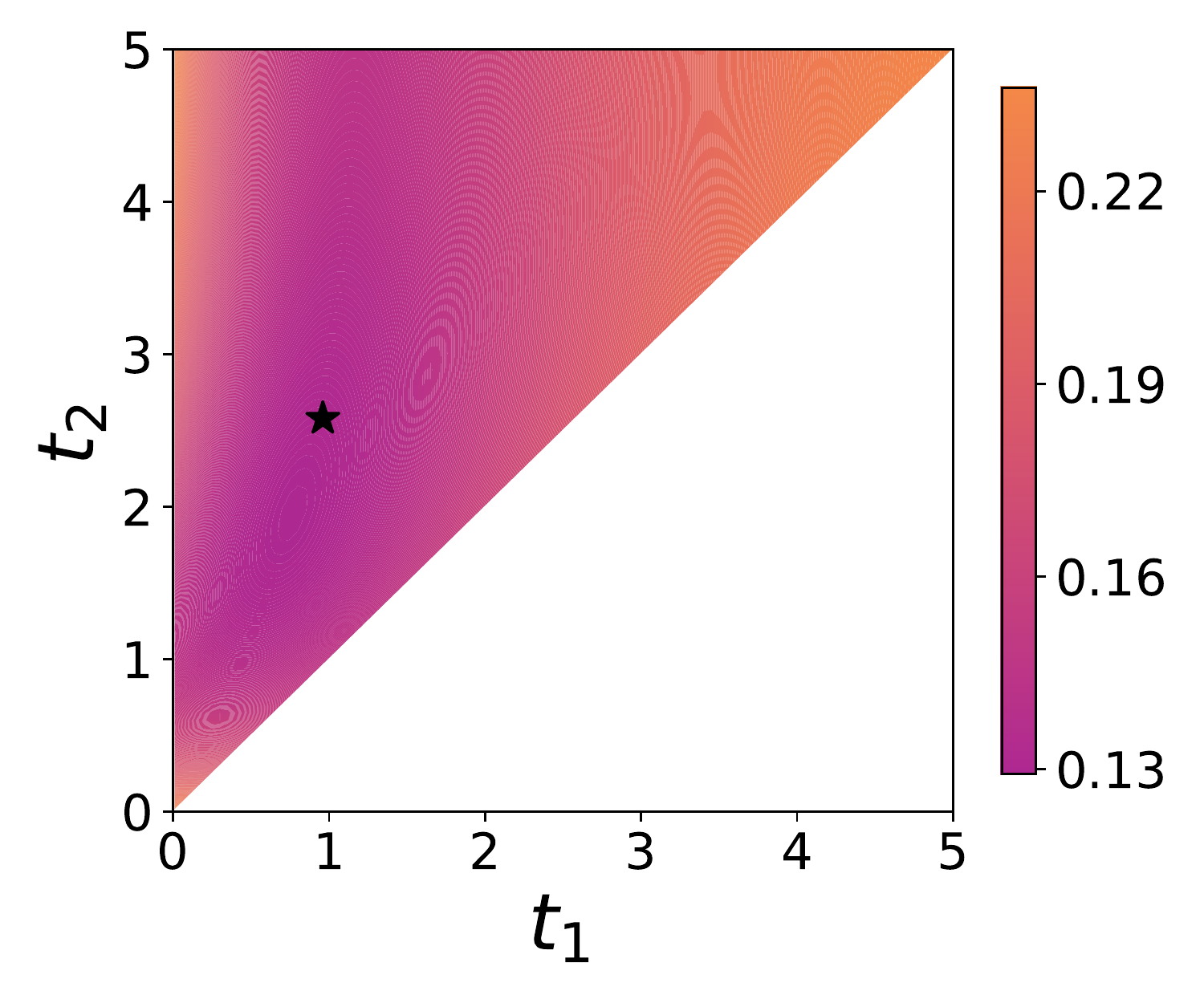}
    		\caption{Posterior Truncated Normal}
    	\end{subfigure}
    	~~
    	\begin{subfigure}[b]{0.31\textwidth}
    		\centering
    		\includegraphics[width=\textwidth]{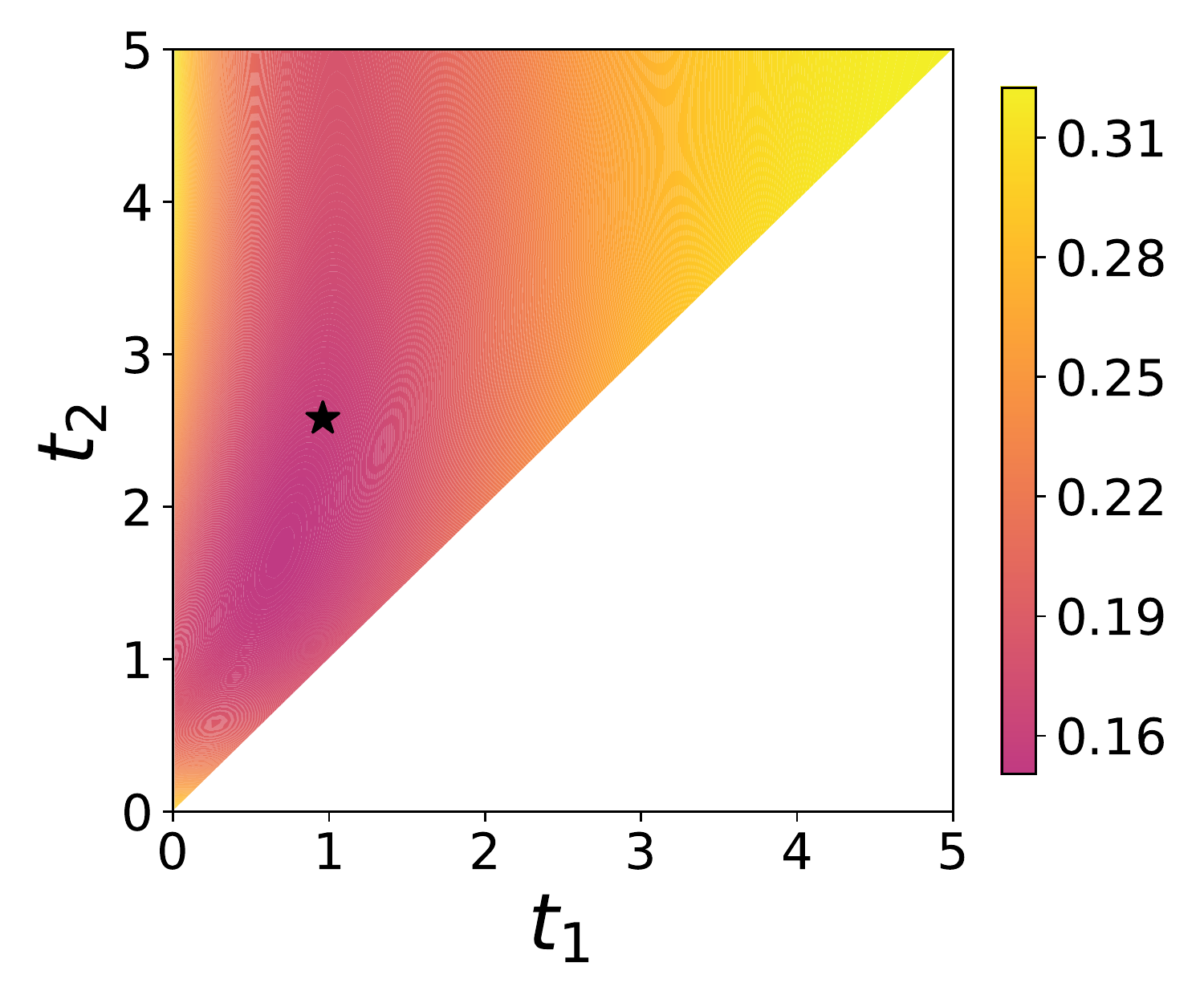}
    		\caption{Laplace}
    	\end{subfigure}
  \caption{Absolute EIG errors corresponding to the estimates depicted in Fig.~\ref{fig:disease}. 
  The optimal design $(t_1^*, t_2^*)$ determined by an exact method is indicated with a star.
  The absolute error of the LogNormal Posterior estimate is $\sim10^{-3}$ across the design space.
  The mean absolute error of the Laplace EIG estimates across the design space is about 30$\%$ higher than for the Posterior method with a Truncated Normal variational distribution.
  In this case the Laplace method results in an upper bound, while (as always) both Posterior methods yield a lower bound.
  All three figures have the same scale as Fig.~\ref{fig:disease}, except for the LogNormal errors, which have been scaled by an additional factor of 100.}
    \label{fig:diseasediff}
\end{figure*}

We examine experimental design for the simple continuous time process considered in ~\citep{cook2008optimal} and~\citep{kleinegesse2018efficient}, arising in epidemiology.
Consider a population with fixed size $N$ that is initially healthy at time $t=0$, with individuals becoming infected at a constant rate $b$ as time evolves.
We consider a design space $d = (t_1, t_2)$, where $0 \le t_1 \le t_2$, corresponding to the times at which we measure the number of infected individuals. 
We place a log-normal prior on the infection rate $b$.

For this example, we investigate how the choice of variational family affects the asymptotic bias. In Fig.~\ref{fig:disease} we compare the EIG surfaces obtained using four estimators: i) an exact method that uses brute force quadrature; ii) $\ipost$ with a log-normal variational distribution; iii) $\ipost$ with a truncated normal variational distribution; and iv) the Laplace approximation $\ilap$. The log-normal family matches the true posterior best, giving mean absolute errors $\sim 10^{-3}$. The second posterior method and the Laplace approximation both make the same distributional assumption, but Laplace results in absolute errors that are about 30\% higher
than for the posterior method.
See Fig.~\ref{fig:diseasediff} for a closer analysis of the errors of the approximate methods.

\paragraph{Experimental details}

The likelihood for observing $(I_1, I_2)$ infected individuals from a population of size $N$ at times $(t_1, t_2)$ is given by \citep{ehrenfeld1962some}:
\begin{align}
\nonumber
    p(I_1, I_2 | b, t_1, t_2) = &\frac{N!}{I_1! (I_2-I_1)!(N-I_2)!} \left[ 1 - e^{-b t_1} \right]^{I_1} 
\times \\ & \left[ 1 - e^{-b (t_2-t_1)} \right]^{I_2-I_1}
\left[e^{-b t_1} \right]^{I_2-I_1} \left[e^{-b t_2} \right]^{N-I_2}
\end{align}
The prior over the infection rate $b > 0$ is taken to be
\begin{align}
\log b \sim N(\mu_b, \sigma_b) 
\end{align}
so that the joint density is given by
\begin{align}
p(I_1, I_2, b | t_1, t_2) = p(I_1, I_2 | b, t_1, t_2)  p(b) 
\end{align}
In our experiment we choose $N=10$, $\mu_b=0$, and $\sigma_b=0.25$. The figures are scaled such that the maximum EIG over the design space (as computed with the exact method) is 1.0. For all four EIG estimation methods we use quadrature and exact summation over the outcomes $(I_1, I_2)$ where appropriate to obtain maximally accurate results. That is, the obtained results are only constrained by the methods themselves and not the computational budget used. Note that we do not make use of any kind of amortization.

\section{Consistent \acrshort{EIG} estimation with control variates}
\label{sec:cv}

In this section, we show that an approximation to the marginal density $\qm(y|d)$ can be used a control variate. Control variates are a means to reduce the variance of Monte Carlo estimators by using expectations which can be computed analytically. Here, we assume that, for every $\theta$, the KL divergence $\kl{p(y|\theta,d)}{\qm(y|d)}$ can be computed analytically. For example, this would be the case if both $p(y|\theta,d)$ and $\qm(y|d)$ were Gaussian. 

We begin by writing the EIG as
\begin{align}
	\eig(d) &= \E_{p(y, \theta|d)}\left[\log \frac{p(y|\theta,d)}{p(y|d)} \right] \\
	&= \E_{p(y, \theta|d)}\left[\log \frac{p(y|\theta,d)}{\qm(y|d)} \right] + \E_{p(y, \theta|d)}\left[\log \frac{\qm(y|d)}{p(y|d)} \right] \\
	&= \E_{p(\theta)}\left[\kl{p(y|\theta,d)}{\qm(y|d)} \right] - \kl{p(y|d)}{\qm(y|d)}.
\end{align}
We can now use our assumption on the first term, 
\begin{align}
\mathbb{E}_{p(\theta)} \left[ \kl{p(y|\theta,d)}{\qm(y|d)} \right ] \rightarrow 
\mathbb{E}_{p(\theta)} \left[ \rm{analytic\;function\;of}\;\theta \right ]
\end{align}
and this expectation can be computed efficiently with conventional Monte Carlo. For the second term, we use Nested Monte Carlo
\begin{equation}
	\label{eq:control_variate}
	\kl{p(y|d)}{\qm(y|d)} \approx \frac{1}{N}\sum_{n=1}^N \log \frac{\frac{1}{M}\sum_{m=1}^M p(y_n|\theta_m,d)}{\qm(y_n|d)}
\end{equation}
where $y_n\iid p(y|d)$ and $\theta_m \iid p(\theta)$. The key benefit of this approach is that this estimator may have lower variance than a direct NMC estimator of $\eig(d)$. Indeed, if we let $A = \log \left(\frac{1}{M}\sum_{m=1}^M (y|\theta_m,d)\right)$ and $B = \log \qm(y|d)$ then the variance of the estimator in \eqref{eq:control_variate} is
\begin{equation}
	\text{Var}(A - B) = \text{Var}(A) + \text{Var}(B) - 2\,\text{Cov}(A, B)
\end{equation}
so the variance will be low when $\text{Cov}(A, B)$ is large. We can expect this to happen when $\qm(y|d)$ is a good approximation to the true marginal density $p(y|d)$.

Finally, note that just like $\ivnmc$, this estimator
is consistent, i.e.~it will converge to the EIG as $N, M \rightarrow \infty$.

\section{$\kl{q}{p}$ versus $\kl{p}{q}$}
\label{sec:reverseforward}

\begin{figure}[t]
  	    		
  	    		\begin{subfigure}[b]{0.54\textwidth}
  	    		\centering{\includegraphics[width=\textwidth]{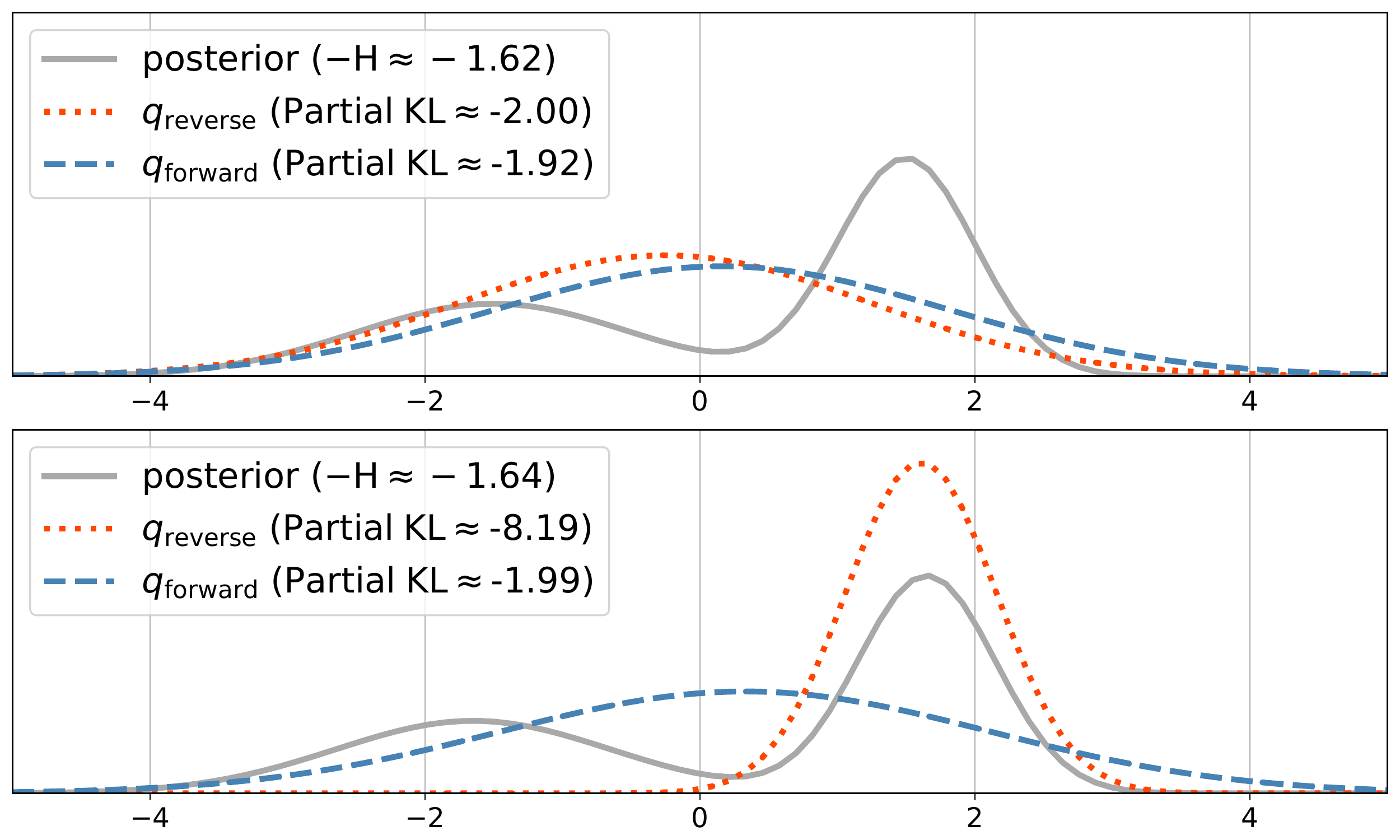}}
  	    		\caption{Forward and reverse KL minimization}
  	    		\end{subfigure}
  	    		\begin{subfigure}[b]{0.45\textwidth}
  	    		\centering{\includegraphics[width=\textwidth]{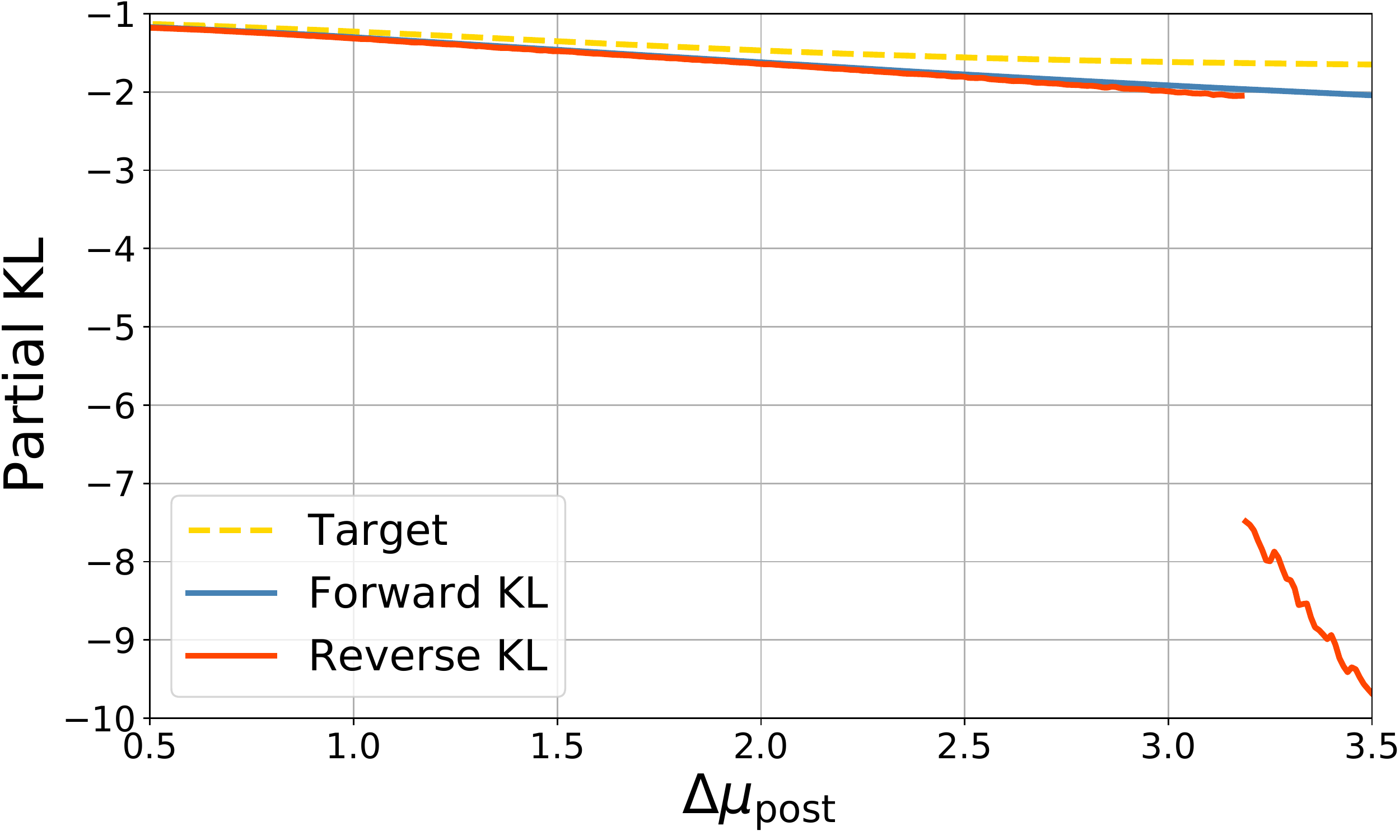}}
  	    		\caption{ Discontinuity in the partial KL }
  	    		\end{subfigure}

  	    		\caption{(a) Normal variational distributions found by fitting to a target posterior that is a mixture with two distinct Normal components. In both plots, the target posterior is a mixture of $N(\mu_1, 0.5^2)$ and $N(\mu_2, 1.0^2)$ and we vary $\Delta \mu_{\rm post} = \mu_1 - \mu_2$. In the top plot, the gap between the two components is $\Delta \mu_{\rm post} = 3.0$, while in the bottom plot $\Delta \mu_{\rm post} = 3.3$. In contrast to the behaviour resulting from forward KL minimization, the mode-seeking behaviour of reverse KL minimization leads to a large change in the corresponding optimal variational distribution from top to bottom.
  	    		(b) We plot the partial KL as we vary $\Delta \mu_{\rm post}$ for the target posterior described in (a). The partial KL as estimated by reverse KL minimization exhibits a sharp discontinuity as the gap between the two components crosses $\Delta \mu_{\rm post} \approx 3.18$. \label{fig:klexploration}}

%
  	    		
\end{figure}

In Appendix~\ref{sec:app:ipost}, we showed that our posterior estimator is implicitly minimizing the following expected KL divergence
\begin{equation}
\label{eq:posterior_kl_error}
	\eig(d) - \Ipost(d) = \E_{p(y|d)}\left[ \kl{p(\theta|y,d)}{\qp(\theta|y,d)} \right].
\end{equation}
In variational inference, the inner KL divergence is referred to as the \textit{forward} KL.
In this section, we compare our approach with a similar approach which also uses a posterior approximation, but instead minimize the \textit{reverse} KL divergence, $\kl{\qp(\theta|y,d)}{p(\theta|y,d)}$.


Specifically, we explore how the reverse KL divergence exhibits discontinuous behaviour that could be problematic in the context of EIG estimation. We begin by writing the posterior estimator as
\begin{equation}
	\Ipost(d) = \E_{p(y|d)}\left[\E_{p(\theta|y)}[\log \qp(\theta|y,d)] \right] + H[p(\theta)].
\end{equation}
The term involving $\qp$ is the expectation of the partial KL, $\mathbb{E}_{p(\theta|y)} \left[ \log \qp(\theta|y,d) \right]$. We will show that reverse KL minimization can lead to a discontinuity in the partial KL.

We consider two possible methods for choosing $\qp$. We know from \eqref{eq:posterior_kl_error} that the optimal choice of $\qp$ within a variational family $\mathcal{Q}$ is
\begin{equation}
\label{eq:qforward}
q_{\rm forward}(\theta|y,d) \triangleq \argmin_{q \in \mathcal{Q}}  \kl{p(\theta | y, d)}{q(\theta)}.
\end{equation}
An alternative choice is
\begin{equation}
\label{eq:qreverse}
q_{\rm reverse}(\theta|y,d) \triangleq \argmin_{q \in \mathcal{Q}}  \kl{q(\theta)}{ p(\theta | y, d) }
\end{equation}
which is the form usually seen in variational inference. The posterior method outlined in Section~\ref{sec:method} attempts to learn $q_{\rm forward}$ for each $y$ by maximizing the bound $\Ipost$. In this appendix, we show that the alternative $q_{\rm reverse}$, as well as resulting in less accurate EIG estimates in light of \eqref{eq:posterior_kl_error}, can lead to discontinuities in the partial KL.

Minimizing the reverse KL can result in the well-known behaviour of mode-locking---and thus mode-dropping---which in our context can result in significant misestimates of the EIG. Furthermore, since this mode-locking behaviour is discontinuous (so that it can occur for a particular design $d$ but not for a neighbouring design $d^\prime$) it can potentially result in large design-dependent bias in EIG estimation. For a quantitative exploration of this phenomenon for two bimodal posteriors and a Normal family of variational distributions $\mathcal{Q}$ see Figure~\ref{fig:klexploration}.



\end{document}